\let\emph\textit
\def\R{{\mathbb{R}}}
\def\1{\textbf{1}}
\newcommand{\onevct}{\bm{1}}
\newcommand{\zerovct}{\bm{0}}
\newcommand{\zeromtx}{\mathbf{0}}
\newtheorem{theorem}{Theorem}
\newtheorem{lemma}{Lemma}
\newtheorem{definition}{Definition}
\newtheorem{example}{Example Model}
\newcommand{\Prob}[2][]{\mathbb{P}_{#1}\left\{ {#2} \right\}}
\newcommand{\Expect}[2][]{\mathbb{E}_{#1}\left[ #2 \right]}
\newcommand{\abs}[1]{\left\vert {#1} \right\vert}
\newcommand{\norm}[1]{\left\Vert {#1} \right\Vert}
\newcommand{\normsq}[1]{{\norm{#1}}^2}
\newcommand{\normf}[1]{{\norm{#1}}_\text{F}}
\newcommand{\diag}[1]{\operatorname{diag}{#1}}
\newcommand{\tr}[1]{\operatorname{tr}{#1}}
\newcommand{\cl}[1]{\operatorname{cl}{#1}}
\newcommand{\conv}[1]{\operatorname{conv}{#1}}
\newcommand{\vect}[1]{\operatorname{vec}{#1}}
\newcommand{\st}{\operatorname*{subject\; to}}
\newcommand{\maximize}{\operatorname*{maximize}}
\newcommand{\minimize}{\operatorname*{minimize}}
\newcommand{\vct}[1]{{#1}}
\newcommand{\lmax}{\operatorname{\lambda_{tmax}}}
\newcommand{\lmin}{\operatorname{\lambda_{tmin}}}
\newcommand{\Sm}{\mathcal{S}^{n,m}}
\newcommand{\Smp}{\mathcal{S}_{+}^{n,m}}
\newcommand{\Um}{\mathcal{U}^{n,m}}
\newcommand{\Vm}{\mathcal{V}^{n,m}}
\newcommand{\om}{{\otimes m}}
\newcommand{\inprod}[2][]{\left\langle {#1},{#2} \right\rangle}
\newcommand{\yyast}{y^{\ast \om}}
\newcommand{\Vcone}{Carathéodory symmetric tensor cone\xspace}
\newcommand{\sgmm}{\sigma_2^{n,m}}
\newcommand{\sgmc}{\bar{\sigma}_2^{n,m}}
\newcommand{\yast}{{y^\ast}}
\newcommand{\Vast}{{V^\ast}}
\def\algbackskip{\hskip-\ALG@thistlm}
\title{Exact Partitioning of High-order Models \\ with a Novel Convex Tensor Cone Relaxation}
\author{
  \textbf{Chuyang Ke}\\Department of Computer Science\\Purdue University\\\texttt{cke@purdue.edu}
  \and 
  \textbf{Jean Honorio}\\Department of Computer Science\\Purdue University\\\texttt{jhonorio@purdue.edu}
}
\date{}
\begin{document}
\maketitle

\begin{abstract}
In this paper we propose an algorithm for exact partitioning of high-order models.
We define a general class of $m$-degree Homogeneous Polynomial Models, which subsumes several examples motivated from prior literature.
Exact partitioning can be formulated as a tensor optimization problem.
We relax this high-order combinatorial problem to a convex conic form problem.
To this end, we carefully define the \Vcone, and show its convexity, and the convexity of its dual cone.
This allows us to construct a primal-dual certificate to show that the solution of the convex relaxation is correct (equal to the unobserved true group assignment) and to analyze the statistical upper bound of exact partitioning.
\end{abstract}

\allowdisplaybreaks

\section{Introduction}
Partitioning and clustering algorithms have been favored by researchers from various fields, including machine learning, data mining, molecular biology, and network analysis \citep{xu2015comprehensive, cai2015greedy, nugent2010overview, berkhin2006survey}. Although there is no identical criterion, partitioning algorithms often aim to find a group labeling for a set of entities in a dataset equipped with some \emph{pairwise} metric. In general, the goal is to maximize in-group \emph{affinity}, that is, the entities from the same group are more similar to those from different groups \citep{liu2010robust2, huang2012affinity}.
However in many complex real-world networks, pairwise metrics are not expressive enough to capture all the information. One common assumption is that entities interact in groups instead of pairs. For instance, in a co-authorship network, researchers collaborate in small groups and publish papers \citep{liu2005co}. Another example is the air traffic network \citep{rosvall2014memory}, such that a flight may follow a triangular route A-B-C-A. 
In these scenarios, pairwise metrics are not sufficient to handle high-order relationships between entities. Thus it is important to develop a general high-order partitioning algorithm that can better characterize multi-entity interactions in complex networks.

Recent years witnessed a growing amount of literature on high-order problems, most of them investigating hypergraphs and related applications \citep{papa2007hypergraph, agarwal2005beyond, gibson2000clustering, hagen1992new}. A common approach used in hypergraph-related works, is to transform the hypergraph to a pairwise graph by embedding high-order interactions into pairwise affinities, and then apply traditional graph-based partitioning algorithms \citep{leordeanu2012efficient, zhou2007learning}. 

In this paper we propose a novel high-order model class, namely \emph{$m$-degree Homogeneous Polynomial Models ($m$-HPMs)}. Our $m$-HPM class definition employs the use of homogeneous polynomials to carefully construct an $m$-order tensor, which captures the multi-entity affinities in underlying high-order networks. 
We also provide an \emph{exact partitioning} algorithm with statistical guarantees. 
It is worth mentioning that in the case of second order ($m=2$), the partitioning problem reduces to the Minimum Bisection problem, which is known to be NP-hard \citep{GAREY1976237}.
We relax a high-order combinatorial problem to a convex \emph{conic form problem}, and analyze the Karush–Kuhn–Tucker (KKT) conditions for the optimal solution. Conic form problems are a highly general class of convex optimization problems. For example,  \emph{semidefinite programming} is a special case of conic form programs, when the cone is the set of positive semidefinite matrices.   
We prove that as long as certain statistical conditions are fulfilled, exact partitioning in $m$-HPMs can be achieved.

\textbf{Summary of Our Contributions.}
We provide a series of novel results in this paper:
\begin{itemize}
\item Our definition of $m$-HPMs is a contribution. We are providing the first general model class which characterizes multi-entity interactions in various high-order models. Our definition is highly general, subsumes a wide range of high-order models studied in prior literature, and is amenable to analysis. We show that several high-order problems, including high-order counting models, hypergraph cuts / cliques / volumes / conductance, and motif models, belong to the class of $m$-HPMs.
\item We formulate exact partitioning as a high-order combinatorial optimization problem, and relax it to a convex conic form problem by employing carefully-defined novel tensor primal and dual cones. 
\item We construct a novel primal-dual certificate that leads to the optimal solution of the exact partitioning problem. KKT conditions guarantee our solution to be optimal, as long as the statistical conditions are satisfied. We furthermore characterize the statistical upper bound of exact partitioning by analyzing the tensor eigenvalues associated with the optimal solution.
\end{itemize}
\section{Problem Setting and Notation}
\label{section:prelim}

In this section, we introduce the notations that will be used in the paper. 
For any positive integer $n$, we use $[n]$ to denote the set $\{1, \ldots, n\}$.
For clarity when dealing with a sequence of objects, we use the superscript ${(i)}$ to denote the $i$-th object in the sequence, and subscript $j$ to denote the $j$-th entry. For example, for a sequence of vectors $\{x^{(i)}\}_{i \in [n]}$, $x^{(1)}_2$ represents the second entry of vector $x^{(1)}$. The notation $\otimes$ is used to denote outer product of vectors, for example, $x^{(1)} \otimes \ldots \otimes x^{(m)}$ is a tensor of order $m$, such that 
$
(x^{(1)} \otimes \ldots \otimes x^{(m)})_{i_1, \ldots, i_m} = x^{(1)}_{i_1} \ldots x^{(m)}_{i_m}.
$
We use $\onevct$ to denote the all-one vector.

Let $A$ be an $m$-th order $n$-dimensional real tensor, such that $A_{i_1,\ldots, i_m} \in \R$, 
where $i_j \in [n]$ for every $j \in [m]$. Throughout the paper we require $m$ to be a positive even integer; the motivation will be discussed in Section \ref{section:discussions}.

A tensor is \emph{symmetric} if it is invariant under any permutation of its indices, i.e., 
$
A_{\sigma(i_1),\ldots, \sigma(i_m)} = A_{i_1,\ldots, i_m}
$
for any permutation $\sigma: [m] \to [m]$. We denote the space of all $m$-th order $n$-dimensional symmetric tensors as 
$
\Sm := \{A \mid A_{i_1,\ldots, i_m} \in \R, i_j \in [n], j \in [m], A \text{ is symmetric} \}
$.
Note that $\Sm$ is a vector space, with dimension (i.e., the maximum number of different entries) equal to 
$
\dim{\Sm} = \binom{m+n-1}{m}.
$
We denote the constant 
$
M := \dim{\Sm} + 1= \binom{m+n-1}{m} + 1
$.

We use $\sgmm$ to denote the set of $m$-tuples in the form of $\sigma(i_1,i_1,i_2,i_2,\dots,i_{m/2}, i_{m/2})$, for any permutation $\sigma:[m] \to [m]$ and $i_j \in [n]$. We use $\sgmc$ to denote the complement set $\{(i_1,\dots,i_m) \mid (i_1,\dots,i_m) \notin \sgmm\}$.

For symmetric tensors $A,B \in \Sm$, we define the \emph{inner product}  $\inprod[A]{B}$, the \emph{tensor Frobenius norm} $\normf{A}$, and the \emph{tensor trace} $\tr(A)$ respectively as
\begin{align*}
&\inprod[A]{B} = \sum_{i_1,\ldots,i_m = 1}^{n} A_{i_1 ,\ldots, i_m} B_{i_1 ,\ldots, i_m} ,\qquad \\
&\normf{A} = \sqrt{\inprod[A]{A}}, \qquad 
\tr(A) = \sum_{i = 1}^{n} A_{i,\ldots, i} \,.
\end{align*}

For any vector $u \in \R^n$, we denote the corresponding $m$-th order $\emph{rank-one}$ tensor as $u^{\om}$, where 
$(u^{\om})_{i_1 ,\ldots, i_m} = u_{i_1} \ldots u_{i_m}$,
and we denote the set of all $m$-th order $n$-dimensional rank-one tensors as 
\[
\Um := \{ u^\om \mid u \in \R^n \} \,.
\]

For any tensor $A \in \Sm$, we call $A$ a \emph{positive semidefinite (PSD)} tensor, if for every $B \in \Um$, $\inprod[A]{B} \geq 0$. Similarly, $A$ is called \emph{positive definite}, if for every $B \in \Um$, $\inprod[A]{B} > 0$. We denote the set of all $m$-th order $n$-dimensional positive semidefinite tensors as 
\[
\Smp := \{ A \mid  A \in \Sm, \inprod[A]{B} \geq 0, \forall B\in \Um \} \,.
\]

We also introduce the \emph{\Vcone} $\Vm$, which is defined as 
\[
\Vm := \left\{\sum_{i=1}^M A^{(i)} \mid A^{(i)} \in \Um, i \in [M] \right\} \,.
\]

It might be unclear at this time, but in the next section we prove that $\Smp$ and $\Vm$ are well-defined convex cones.  

We define the \emph{maximum tensor eigenvalue} and the \emph{minimum tensor eigenvalue} of $A$, by its variational characterization, as  
\begin{align*}
&\lmax(A) = \sup_{u \in \R^n, \norm{u}= 1} \inprod[A]{u^\om}, \qquad \\
&\lmin(A) = \inf_{u \in \R^n, \norm{u}= 1} \inprod[A]{u^\om}.
\end{align*}
where $\norm{u}$ is the Euclidean norm of vector $u$.

We now introduce the definition of $m$-degree Homogeneous Polynomial Models, or $m$-HPM for short. 

\begin{definition}[$m$-degree Homogeneous Polynomial Model]
For a high-order random model $\mathcal{M}$, let $n$ be the number of entities (each of them belonging to either one of the two groups), $y^\ast \in \{+1, -1\}^n$ be the unobserved true group assignment, $m$ be the order of the model, $p = (p_0, \ldots, p_m)$ be the coefficient parameter, $\sigma^2$ be the variance, $B$ be the entrywise bound, and $W$ be the random affinity tensor associated with the model. 
We say model $\mathcal{M}$ belongs to the class of $m$-HPM$(n,p,\sigma^2,B)$, if $\mathcal{M}$ satisfies the following properties:
\begin{enumerate}[label=\textnormal{(P\arabic*)}]
    \item Expectation Decomposition: 
    $\displaystyle{\Expect{W} = \sum_{k=0}^m p_k \sum_{\substack{z\in \{0,1\}^m \\ \onevct^\top z = k}} \bigotimes_{i=1}^{m} \left(z_i \onevct + (1-z_i) y^\ast \right)}$; \label{MP1}
    \item Variance Boundedness: 
    $\Expect{\normf{W-\Expect{W}}^2} \leq \sigma^2$; \label{MP2}
    \item Entrywise Boundedness: 
    $\abs{W_{i_1,\ldots, i_m}} \leq B$, for all $i_1, \ldots, i_m \in [n]$. \label{MP3}
\end{enumerate}
The goal is to identify the group membership $y^\ast$ from the observed affinity tensor $W$.
\label{def:modelclass}
\end{definition}

{Our definition of $m$-degree Homogeneous Polynomial Model is highly general.} \ref{MP1} requires the expected affinity tensor can be decomposed into a linear combination of rank-$1$ tensors, and \ref{MP2}, \ref{MP3} only require the variance and absolute value to be bounded above. Informally speaking, \ref{MP1} says one could set the expectation of $W$ for each group arbitrarily by choosing proper $p$'s (see Lemma \ref{lemma:transform}). In Section 5, we show that several high-order examples motivated from prior literature, such as high-order counting models, hypergraph cuts models, minimum bisection models, and motif models, belong to the class of $m$-HPMs.


\section{Tensor Cones and Related Lemmas}
\label{section:cone}

In this section, we provide a series of tensor lemmas that will be used in our analysis.
Proofs of the lemmas can be found in Appendix \ref{section:coneproofs}.
First, we start with some general properties of tensors.

\begin{lemma}[Tensor Inner Product]
For any tensor $X = x^{(1)} \otimes \ldots \otimes x^{(m)}$ and $Y = y^{(1)} \otimes \ldots \otimes y^{(m)}$ in $\Sm$, we have
$
\inprod[X]{Y}
=
\prod_{i=1}^m x^{(i)\top} y^{(i)} .
$
\label{lemma:tensor_innerprod}
\end{lemma}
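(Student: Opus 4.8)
The plan is to unwind the definition of the tensor inner product given in Section~\ref{section:prelim} and then apply the distributive law for finite sums. First I would write, straight from the definition,
\begin{align*}
\inprod[X]{Y} = \sum_{i_1,\ldots,i_m = 1}^{n} X_{i_1,\ldots,i_m}\, Y_{i_1,\ldots,i_m}.
\end{align*}
Next, using the definition of the outer product, I would substitute $X_{i_1,\ldots,i_m} = \prod_{j=1}^m x^{(j)}_{i_j}$ and $Y_{i_1,\ldots,i_m} = \prod_{j=1}^m y^{(j)}_{i_j}$, so that the summand becomes $\prod_{j=1}^m x^{(j)}_{i_j} y^{(j)}_{i_j}$.

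The key step is then to factorize the multi-index sum. Since each factor $x^{(j)}_{i_j} y^{(j)}_{i_j}$ depends only on its own index $i_j$, repeated application of the elementary identity $\sum_{a}\sum_{b} f(a)g(b) = \big(\sum_a f(a)\big)\big(\sum_b g(b)\big)$ over the finite ranges $i_j \in [n]$ yields
\begin{align*}
\inprod[X]{Y} = \prod_{j=1}^{m} \left( \sum_{i_j = 1}^{n} x^{(j)}_{i_j} y^{(j)}_{i_j} \right) = \prod_{j=1}^{m} x^{(j)\top} y^{(j)},
\end{align*}
since the inner sum is exactly the Euclidean dot product $x^{(j)\top} y^{(j)}$.

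There is no substantive obstacle here; the computation is pure bookkeeping with finite sums. The only points worth flagging are (i) that the symmetry of $X$ and $Y$ is never used in the algebra — it enters the hypothesis only because the inner product $\inprod[\cdot]{\cdot}$ was defined on the symmetric space $\Sm$, and a rank-one outer product is not symmetric in general — and (ii) that the identity in fact holds verbatim for arbitrary (not necessarily symmetric) rank-one tensors. This slightly more general reading is what gets invoked later, e.g. when specializing to $x^{(i)} = u$ and $y^{(i)} = v$ to obtain $\inprod[u^{\om}]{v^{\om}} = (u^\top v)^m$.
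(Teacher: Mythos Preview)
Your proposal is correct and matches the paper's own proof essentially line for line: expand the inner product by definition, substitute the rank-one entries, group the summand as $\prod_j x^{(j)}_{i_j} y^{(j)}_{i_j}$, and factor the multi-index sum into a product of dot products. Your side remark that symmetry plays no role in the algebra is accurate and harmless.
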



\begin{lemma}[Tensor Norm Inequality]
For any tensor $A \in \Sm$, $\lmax(A) \leq \norm{A}_F$.
\label{lemma:normineq}
\end{lemma}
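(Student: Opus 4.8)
The plan is to bound $\lambda_{tmax}(A) = \sup_{\|u\|=1} \langle A, u^{\otimes m}\rangle$ directly by the Frobenius norm using Cauchy–Schwarz on the tensor inner product. First I would fix an arbitrary unit vector $u \in \R^n$ with $\|u\| = 1$, and consider the quantity $\langle A, u^{\otimes m}\rangle$. By the Cauchy–Schwarz inequality applied to the tensor inner product on $\Sm$ (which is a genuine inner product space, so Cauchy–Schwarz holds), we get $\langle A, u^{\otimes m}\rangle \leq \|A\|_F \cdot \|u^{\otimes m}\|_F$.

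The next step is to compute $\|u^{\otimes m}\|_F$. Using Lemma~\ref{lemma:tensor_innerprod} with $X = Y = u^{\otimes m}$, i.e.\ $x^{(i)} = y^{(i)} = u$ for all $i \in [m]$, we have $\langle u^{\otimes m}, u^{\otimes m}\rangle = \prod_{i=1}^m u^\top u = (\|u\|^2)^m = 1$ since $\|u\| = 1$. Hence $\|u^{\otimes m}\|_F = \sqrt{\langle u^{\otimes m}, u^{\otimes m}\rangle} = 1$. Combining with the previous bound yields $\langle A, u^{\otimes m}\rangle \leq \|A\|_F$ for every unit vector $u$.

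Finally, taking the supremum over all unit vectors $u \in \R^n$ on the left-hand side gives $\lambda_{tmax}(A) = \sup_{\|u\|=1}\langle A, u^{\otimes m}\rangle \leq \|A\|_F$, which is the claim. There is essentially no main obstacle here — the proof is a two-line application of Cauchy–Schwarz together with the rank-one norm computation from Lemma~\ref{lemma:tensor_innerprod}. The only point requiring a moment of care is confirming that $\langle \cdot, \cdot \rangle$ on $\Sm$ really is a positive-definite symmetric bilinear form so that Cauchy–Schwarz applies; this is immediate from the entrywise definition $\langle A, B\rangle = \sum_{i_1,\dots,i_m} A_{i_1,\dots,i_m} B_{i_1,\dots,i_m}$, which is just the standard Euclidean inner product on the coordinates of the tensor.
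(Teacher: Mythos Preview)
Your proof is correct and follows essentially the same approach as the paper: apply Cauchy--Schwarz to the tensor inner product and use $\langle u^{\otimes m}, u^{\otimes m}\rangle = (u^\top u)^m = 1$ for unit $u$. The only cosmetic difference is that the paper squares first and then bounds, whereas you bound directly and then take the supremum; your version is arguably slightly cleaner since it avoids the (harmless) case distinction when $\lambda_{tmax}(A)$ could be negative.
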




\begin{lemma}[Positive Semidefinite Tensor Cone]
$\Smp$ is a convex cone.
\label{lemma:convex_cone}
\end{lemma}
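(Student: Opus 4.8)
The plan is to recognize $\Smp$ as the dual cone of the rank-one set $\Um$, restricted to the ambient vector space $\Sm$, and to exploit the elementary fact that such a dual cone is automatically a convex cone. Concretely, I would verify in turn that $\Smp$ is (i) nonempty, (ii) closed under multiplication by nonnegative scalars, and (iii) closed under addition; convexity then follows immediately from (ii) and (iii). The only ingredient needed throughout is that the tensor inner product $\inprod[A]{B} = \sum_{i_1,\ldots,i_m} A_{i_1,\ldots,i_m} B_{i_1,\ldots,i_m}$ is linear in its first argument for each fixed $B$, which is clear from this defining formula, together with the fact (noted in Section~\ref{section:prelim}) that $\Sm$ is a vector space, so the operations used in (ii) and (iii) keep us inside $\Sm$.

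For step (i), the zero tensor lies in $\Sm$ and satisfies $\inprod[\zeromtx]{B} = 0 \geq 0$ for all $B \in \Um$, so $\zeromtx \in \Smp$ and the set is nonempty. For step (ii), given $A \in \Smp$ and $\lambda \geq 0$, I would note that $\lambda A \in \Sm$ and, for every $B \in \Um$, $\inprod[\lambda A]{B} = \lambda\,\inprod[A]{B} \geq 0$, whence $\lambda A \in \Smp$; this already shows $\Smp$ is a cone. For step (iii), given $A_1, A_2 \in \Smp$, I would note $A_1 + A_2 \in \Sm$ and, for every $B \in \Um$, $\inprod[A_1+A_2]{B} = \inprod[A_1]{B} + \inprod[A_2]{B} \geq 0$, so $A_1 + A_2 \in \Smp$.

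Finally, for convexity, for any $A_1, A_2 \in \Smp$ and $\theta \in [0,1]$ I would write $\theta A_1 + (1-\theta)A_2$ as the sum of the two nonnegatively scaled tensors $\theta A_1$ and $(1-\theta)A_2$, each of which lies in $\Smp$ by (ii), and hence their sum lies in $\Smp$ by (iii); this is exactly the statement that $\Smp$ is convex, so together with (ii) it is a convex cone.

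I do not expect any real obstacle here: the defining constraint of $\Smp$ is a family of homogeneous linear inequalities indexed by $B \in \Um$, and an intersection of origin-containing halfspaces is always a convex cone, so the argument is essentially bookkeeping. The one point deserving a sentence of care is that $\lambda A$ and $A_1 + A_2$ remain \emph{symmetric}, i.e., that these operations do not leave $\Sm$; the genuinely delicate cone facts in the paper — such as the corresponding statement for $\Vm$, with its fixed bound $M$ on the number of rank-one summands — are of a different, Carathéodory-type flavor and must be handled separately.
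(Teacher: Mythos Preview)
Your proposal is correct and is essentially the same argument as the paper's: both rely on linearity of $\inprod[\cdot]{u^{\otimes m}}$ to show that nonnegative combinations of elements of $\Smp$ remain in $\Smp$. The paper simply compresses your steps (ii)--(iii) into the single observation $\inprod[\theta_1 A + \theta_2 B]{u^{\otimes m}} = \theta_1\inprod[A]{u^{\otimes m}} + \theta_2\inprod[B]{u^{\otimes m}} \geq 0$ for $\theta_1,\theta_2 \geq 0$, without separately recording nonemptiness or symmetry preservation.
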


\begin{lemma}[\Vcone]
$\Vm$ is a convex cone.
\label{lemma:Vconvexcone}
\end{lemma}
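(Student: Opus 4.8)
The plan is to identify $\Vm$ with the conic hull of the rank-one set $\Um$ and then use the elementary fact that conic hulls are convex cones. Write $\operatorname{cone}(\Um) := \left\{\sum_{i=1}^N \lambda_i (u^{(i)})^{\om} \mid N \in \mathbb{N},\ \lambda_i \ge 0,\ u^{(i)} \in \R^n\right\}$ for this conic hull.

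First I would record the scaling identity: for any $\lambda \ge 0$ and $u \in \R^n$ we have $\lambda\, u^{\om} = (\lambda^{1/m} u)^{\om}$, because $(\lambda^{1/m} u)^{\om}_{i_1,\dots,i_m} = \prod_{j=1}^m \lambda^{1/m} u_{i_j} = \lambda \prod_{j=1}^m u_{i_j} = \lambda\, (u^{\om})_{i_1,\dots,i_m}$, and $\lambda^{1/m} \in \R$ since $\lambda \ge 0$. Thus $\Um$ is closed under multiplication by nonnegative scalars, and since the zero tensor equals $0^{\om} \in \Um$ it follows that $\operatorname{cone}(\Um) = \left\{\sum_{i=1}^N (u^{(i)})^{\om} \mid N \in \mathbb{N},\ u^{(i)} \in \R^n\right\}$, the set of all finite sums of rank-one tensors. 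Taking $N = M$ gives at once $\Vm \subseteq \operatorname{cone}(\Um)$.

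The substance of the proof is the reverse inclusion $\operatorname{cone}(\Um) \subseteq \Vm$, i.e., that $M$ rank-one summands always suffice. Here I would invoke Carathéodory's theorem for convex cones in the ambient vector space $\Sm$: each $u^{\om}$ is symmetric, so $\Um \subseteq \Sm$, and $\dim \Sm = \binom{m+n-1}{m} = M - 1$; hence every $A \in \operatorname{cone}(\Um)$ can be written as a nonnegative combination $A = \sum_{i=1}^{M-1} \mu_i (u^{(i)})^{\om}$ of at most $\dim \Sm = M - 1$ elements of $\Um$. Applying the scaling identity to each summand, $\mu_i (u^{(i)})^{\om} = (\mu_i^{1/m} u^{(i)})^{\om}$, so $A = \sum_{i=1}^{M-1} (v^{(i)})^{\om}$ with $v^{(i)} := \mu_i^{1/m} u^{(i)}$; padding with $v^{(M)} := 0$ (so $(v^{(M)})^{\om}$ is the zero tensor) yields $A = \sum_{i=1}^{M} (v^{(i)})^{\om} \in \Vm$. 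Therefore $\Vm = \operatorname{cone}(\Um)$, and a conic hull is by construction closed under nonnegative scalar multiplication and under addition, hence a convex cone.

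The only delicate point is getting the cardinality right: Carathéodory must be applied with the correct ambient dimension, namely $\dim \Sm = M - 1$ rather than the naive $n^m$, which is precisely why it matters that $u^{\om} \in \Sm$; the bound $M - 1 \le M$ then leaves exactly one slot to pad with the zero tensor. The remaining ingredients — the scaling identity and the convexity of conic hulls — are routine, so I do not expect any further obstacle.
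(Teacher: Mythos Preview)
Your proposal is correct and follows essentially the same route as the paper: identify $\Vm$ with the conic (equivalently, convex) hull of $\Um$ in the ambient space $\Sm$ of dimension $M-1$ via Carath\'eodory, then conclude convexity. The only cosmetic difference is that you invoke the conic version of Carath\'eodory (giving $M-1$ generators, then pad with the zero tensor), whereas the paper uses the convex-hull version directly (giving $M$ generators); your explicit treatment of the scaling identity $\lambda\,u^{\om} = (\lambda^{1/m}u)^{\om}$ is a detail the paper leaves implicit.
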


For any cone $\mathcal{K}$, we use $\mathcal{K}^\ast$ to denote its \emph{dual cone}. We present the following lemmas about duality between the positive semidefinite tensor cone and the \Vcone.

\begin{lemma}[Rank-one Tensors]
$\Um \subset \Smp$, and the dual cone of $\Um$ is $\Smp$, i.e., $(\Um)^\ast = \Smp$. 
\label{lemma:rank1}
\end{lemma}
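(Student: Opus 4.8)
The plan is to establish the lemma in three stages: first the easy inclusion $\Um \subset \Smp$, then the two inclusions defining the dual cone equality.

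First I would show $\Um \subset \Smp$. Take any $A = u^{\om} \in \Um$ and any $B = v^{\om} \in \Um$. By Lemma \ref{lemma:tensor_innerprod}, $\inprod[A]{B} = \prod_{i=1}^m (u^\top v) = (u^\top v)^m$. Since $m$ is a positive even integer, $(u^\top v)^m \geq 0$, so $\inprod[A]{B} \geq 0$ for every $B \in \Um$, which is exactly the definition of $A \in \Smp$. This is the step where the evenness of $m$ (emphasized in the problem setting) is essential.

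Next I would prove $(\Um)^\ast = \Smp$. Recall $(\Um)^\ast = \{A \in \Sm \mid \inprod[A]{B} \geq 0 \text{ for all } B \in \Um\}$. But this set is literally the definition of $\Smp$ given in the excerpt, so the two coincide by definition. The only subtlety is making sure the ambient space over which the dual is taken is $\Sm$ (not the full tensor space); since all our cones live in $\Sm$ and the inner product is the one defined on $\Sm$, this is consistent, and the identity $(\Um)^\ast = \Smp$ follows immediately. If one instead wants to treat the dual cone more carefully as the set of linear functionals nonnegative on $\Um$, I would invoke the Riesz representation on the finite-dimensional inner product space $\Sm$ to identify functionals with tensors, again recovering $\Smp$.

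I do not expect a serious obstacle here: the content is essentially unwinding definitions plus the single observation that $(u^\top v)^m \ge 0$ for even $m$. The only place requiring a little care is confirming that $\Um$ spans $\Sm$ (so that the dual is a genuine cone in $\Sm$ rather than living in a proper subspace), which follows from the standard fact that symmetric tensors are spanned by rank-one symmetric tensors; but this is not even needed for the stated equality, since $\Smp$ was defined directly as the set of tensors in $\Sm$ pairing nonnegatively with all of $\Um$. So the ``hard part'' is really just bookkeeping about which ambient space the dual is computed in, and making the logical flow (definition of $\Smp$ $=$ definition of $(\Um)^\ast$) explicit.
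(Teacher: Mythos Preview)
Your proposal is correct and matches the paper's proof essentially line for line: the paper also invokes Lemma~\ref{lemma:tensor_innerprod} to get $\inprod[u^{\om}]{v^{\om}} = (u^\top v)^m \geq 0$ for the inclusion, and then observes that the definition of $(\Um)^\ast$ coincides verbatim with the definition of $\Smp$. Your additional remarks about the ambient space and spanning are sound but unnecessary, since the paper's definition of $\Smp$ already bakes in the ambient space $\Sm$.
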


Lemma \ref{lemma:rank1} allows us to prove the following result about the dual cone of the \Vcone.

\begin{lemma}[Dual of Positive Semidefinite Tensor Cone]
The dual cone of $\Smp$ is $\Vm$, i.e., $(\Smp)^\ast = \Vm$.
\label{lemma:dualconeV}
\end{lemma}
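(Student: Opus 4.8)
The plan is to establish the two inclusions $\Vm \subseteq (\Smp)^\ast$ and $(\Smp)^\ast \subseteq \Vm$. The first is immediate: if $V = \sum_{i=1}^M u_i^\om \in \Vm$ and $A \in \Smp$, then $\inprod[A]{V} = \sum_{i=1}^M \inprod[A]{u_i^\om} \geq 0$, because each $u_i^\om \in \Um$ and membership in $\Smp$ means $\inprod[A]{B} \geq 0$ for every $B \in \Um$; hence $V \in (\Smp)^\ast$. The reverse inclusion is where the work is, and I would prove it by a separating-hyperplane argument, which forces me to first show that $\Vm$ is \emph{closed}.

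Establishing closedness of $\Vm$ is the step I expect to be the main obstacle, since a bounded-length sum of rank-one real symmetric tensors need not form a closed set in general; the evenness of $m$ is essential here. Because $m$ is even, $(tu)^\om = t^m u^\om$ with $t^m \geq 0$, so $\Um$ is stable under nonnegative scaling, and consequently $\Vm$ coincides with the convex conic hull of $\Um$: by Carathéodory's theorem inside the $(M-1)$-dimensional space $\Sm$, any nonnegative combination of elements of $\Um$ can be written with at most $M-1$ terms and then padded with copies of the zero tensor $0 = \zerovct^\om \in \Um$ to reach exactly $M$ summands. Normalizing each summand, this identifies $\Vm$ with $\{\, tC : t \geq 0,\ C \in \conv{\{u^\om : \norm{u}=1\}}\,\}$. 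The set $S := \{u^\om : \norm{u}=1\}$ is compact as a continuous image of the unit sphere, so $\conv{S}$ is compact and convex; moreover $0 \notin \conv{S}$, since a convex combination $\sum_i \lambda_i u_i^\om$ with some $\lambda_j > 0$ satisfies, by Lemma \ref{lemma:tensor_innerprod}, $\inprod[\sum_i \lambda_i u_i^\om]{u_j^\om} = \sum_i \lambda_i (u_i^\top u_j)^m \geq \lambda_j > 0$ (each term nonnegative because $m$ is even). The conic hull of a compact convex set avoiding the origin is closed: along a convergent sequence $t_k C_k \to X$ with $X \neq 0$ and $C_k \in \conv{S}$, the bound $\norm{C_k} \geq \min_{C \in \conv S}\norm{C} > 0$ keeps $t_k$ bounded, so a subsequential limit $t_k \to t$, $C_k \to C \in \conv{S}$ yields $X = tC \in \Vm$; and $X = 0 \in \Vm$ trivially. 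Hence $\Vm$ is closed.

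With closedness in hand I would finish by separation. Suppose some $A^\ast \in (\Smp)^\ast$ satisfies $A^\ast \notin \Vm$. Since $\Vm$ is a nonempty closed convex cone (Lemma \ref{lemma:Vconvexcone} and the previous step), the separating-hyperplane theorem for closed convex cones gives $H \in \Sm$ with $\inprod[H]{V} \geq 0$ for all $V \in \Vm$ and $\inprod[H]{A^\ast} < 0$. Testing against $V = u^\om \in \Vm$ for arbitrary $u \in \R^n$ shows $\inprod[H]{u^\om} \geq 0$, i.e.\ $H \in \Smp$; then $A^\ast \in (\Smp)^\ast$ forces $\inprod[A^\ast]{H} \geq 0$, contradicting $\inprod[H]{A^\ast} < 0$. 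Therefore $(\Smp)^\ast \subseteq \Vm$, and together with the easy inclusion, $(\Smp)^\ast = \Vm$. (An equivalent route is to combine Lemma \ref{lemma:rank1}, which gives $(\Um)^\ast = \Smp$, with the bipolar theorem — $(\Smp)^\ast$ equals the closed convex conic hull of $\Um$ — and then invoke the closedness above to drop the closure; the separation argument is essentially this, unpacked.)
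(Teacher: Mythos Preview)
Your proposal is correct and follows the same high-level route as the paper: establish that $\Vm$ is closed, then invoke the bipolar identity $(\Smp)^\ast = ((\Um)^\ast)^\ast = \cl(\conv(\Um)) = \Vm$ via Lemma~\ref{lemma:rank1} (your separating-hyperplane argument is exactly this, unpacked, as you yourself note).

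The one genuine difference is in how closedness of $\Vm$ is obtained. The paper argues directly with sequences: given $A_j = \sum_{i=1}^M (x^{(ij)})^\om \to A$, it expands $\normf{A_j}^2 = \sum_i \norm{x^{(ij)}}^{2m} + \sum_{i\neq k}\bigl((x^{(ij)})^\top x^{(kj)}\bigr)^m$ and uses evenness of $m$ to make the cross terms nonnegative, which bounds each $\norm{x^{(ij)}}$ and allows a Bolzano--Weierstrass extraction. You instead recognize $\Vm$ as the conic hull of the compact convex set $\conv\{u^\om:\norm{u}=1\}$, show this set avoids the origin (again using that $m$ is even so $(u_i^\top u_j)^m \geq 0$), and appeal to the standard fact that such a conic hull is closed. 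Both arguments hinge on the same evenness observation; yours is more structural and reusable, the paper's is more self-contained and hands-on.
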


Consequently, Lemma \ref{lemma:dualconeV} gives us the following result about the dual cone of the positive semidefinite tensor cone.

\begin{lemma}[Dual of \Vcone]
The dual cone of $\Vm$ is $\Smp$, i.e., $(\Vm)^\ast = \Smp$.
\label{lemma:doubledual}
\end{lemma}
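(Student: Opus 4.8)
The plan is to establish the two inclusions $\Smp \subseteq (\Vm)^\ast$ and $(\Vm)^\ast \subseteq \Smp$ directly from the definitions, rather than invoking the bidual theorem (though I will comment on that route below).

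For the inclusion $\Smp \subseteq (\Vm)^\ast$, I would take an arbitrary $A \in \Smp$ and an arbitrary $V \in \Vm$. By definition of the \Vcone, $V = \sum_{i=1}^M A^{(i)}$ with each $A^{(i)} \in \Um$, i.e., each $A^{(i)}$ is a rank-one tensor of the form $(u^{(i)})^{\om}$. Using bilinearity of the tensor inner product, $\inprod[A]{V} = \sum_{i=1}^M \inprod[A]{A^{(i)}}$, and each summand is nonnegative because $A$ is positive semidefinite and $A^{(i)} \in \Um$. Hence $\inprod[A]{V} \geq 0$ for all $V \in \Vm$, which is exactly the statement that $A \in (\Vm)^\ast$.

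For the reverse inclusion $(\Vm)^\ast \subseteq \Smp$, the key observation is that $\Um \subseteq \Vm$: a single rank-one tensor $u^{\om}$ can be written as $u^{\om} + \mathbf{0} + \cdots + \mathbf{0}$ ($M-1$ zero terms), and the zero tensor is rank-one since $\mathbf{0} = \zerovct^{\om} \in \Um$. Therefore, given any $A \in (\Vm)^\ast$, we have $\inprod[A]{u^{\om}} \geq 0$ for every $u \in \R^n$, which is precisely the definition of $A \in \Smp$. Combining the two inclusions yields $(\Vm)^\ast = \Smp$.

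I do not expect a genuine obstacle here; the only point needing a little care is the verification $\Um \subseteq \Vm$ via padding with $\zerovct^{\om}$, together with recalling (from Lemma~\ref{lemma:Vconvexcone}) that $\Vm$ is well-defined as a convex cone so that these sums make sense. An alternative one-line argument is to note $(\Vm)^\ast = ((\Smp)^\ast)^\ast$ by Lemma~\ref{lemma:dualconeV}, and then apply the bidual identity $(\mathcal{K}^\ast)^\ast = \mathcal{K}$ valid for any closed convex cone $\mathcal{K}$; this is legitimate because $\Smp = (\Um)^\ast$ by Lemma~\ref{lemma:rank1} is a dual cone and hence closed, and it is convex by Lemma~\ref{lemma:convex_cone}. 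I would likely present the direct two-inclusion proof as the main argument since it is self-contained and avoids invoking closedness.
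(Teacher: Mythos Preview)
Your proposal is correct. Both inclusions go through exactly as you describe: for $\Smp \subseteq (\Vm)^\ast$ you expand $V \in \Vm$ as a sum of rank-one tensors and use the PSD definition termwise; for $(\Vm)^\ast \subseteq \Smp$ the padding $u^{\om} = u^{\om} + \zerovct^{\om} + \cdots + \zerovct^{\om}$ gives $\Um \subseteq \Vm$, whence $(\Vm)^\ast \subseteq (\Um)^\ast = \Smp$ by Lemma~\ref{lemma:rank1}.

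Your route differs from the paper's. The paper argues via the bidual: it invokes the general identity $(\mathcal{K}^\ast)^\ast = \cl(\conv(\mathcal{K}))$, applies it with $\mathcal{K} = \Smp$, observes that $\Smp$ is closed and convex so that $\cl(\conv(\Smp)) = \Smp$, and then substitutes $(\Smp)^\ast = \Vm$ from Lemma~\ref{lemma:dualconeV}. This is precisely the ``alternative one-line argument'' you sketched at the end. The trade-off is that the paper's argument is shorter and reuses Lemma~\ref{lemma:dualconeV} directly, but it tacitly relies on $\Smp$ being closed (which the paper asserts without proof, though it follows easily since $\Smp = (\Um)^\ast$ is a dual cone). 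Your direct two-inclusion argument is more self-contained and sidesteps any appeal to closedness or the bidual theorem, at the cost of a few more lines.
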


Before the end of the section, we would like to discuss the relationship between the \Vcone and several related tensor cones. Recall that in the definition of \Vcone, the number of terms in the summation is defined as $M = \dim{\Sm} + 1 = \binom{m+n-1}{m} + 1$.
If we change the number of terms $M$ to a fixed positive integer $r$ in the definition, i.e., $\left\{\sum_{i=1}^r A^{(i)} \mid A^{(i)} \in \Um, i \in [M] \right\}$, we arrive at the definition of the \emph{completely decomposable tensor cone} \citep{qi2017regularly}. In particular, if the number of terms is fixed to be $1$, we come back to the definition of $\Um$, the \emph{rank-one tensor cone}. 
 
A completely decomposable tensor cone is convex, only if the number of terms in the summation is at least $M$, i.e., $r \geq M$. Furthermore for any $r > M$, the completely decomposable tensor cone will be equivalent to the \Vcone. The intuition is that the dimension of the $n$-dimensional, $m$-order symmetric tensor space is $\binom{m+n-1}{m}$, and by Carathéodory's theorem, for any single point in the convex hull of $\Um$, it can be written as the summation of at most $\binom{m+n-1}{m} + 1$ points in $\Um$. For details please check Proof of Lemma \ref{lemma:dualconeV} in the Appendix \ref{section:coneproofs}.

Here we highlight the connection between $M$ in the definition of the \Vcone, and the \emph{symmetric tensor rank} in the tensor (multilinear) algebra literature. For any symmetric tensor $A$ in the complex field $\mathbb{C}$, its (complex) symmetric rank is defined as $\min\{r \mid A = \sum_{i=1}^r u^{(i)\om}, u^{(i)} \in \mathbb{C}^n\}$ \citep{comon2008symmetric}. If one limits the discussion to the real field, the real symmetric rank of $A$ is defined as $\min\{r \mid A = \sum_{i=1}^r \lambda_i A^{(i)}, A^{(i)} \in \Um , \lambda_i \in \R \}$. The major difference between the real symmetric rank and the \Vcone is that, each term in the definition of the real symmetric rank has a possibly negative coefficient $\lambda_i$.
Furthermore, \citet{comon2008symmetric} proves that the complex symmetric rank of any tensor in $\Sm$ is at most $\binom{m+n-1}{m}$, and \citet{ballico2014upper} shows that the real symmetric rank is at most $m$ times the complex symmetric rank, that is, $m\cdot \binom{m+n-1}{m}$. 
On the other hand, every tensor in the \Vcone has a real symmetric rank at most $M = \binom{m+n-1}{m} + 1$.

\section{Convex Relaxation and Analysis}
\label{section:analysis} 
In this section we investigate the conditions for exact partitioning the $m$-degree Homogeneous Polynomial Model into two groups of equal size. We say an algorithm achieves \emph{exact partitioning} if the recovered node labels $y$ is identical to the true labels $y^\ast$. 

Our analysis consists of two parts. First we show the exact partitioning problem for $m$-HPMs can be relaxed to a \emph{conic form problem}, a class of convex optimization problems containing \emph{semidefinite programming} as a specific case. In the second part we use \emph{primal-dual certificates} and statistical concentration inequalities to analyze the sufficient conditions of the problem.  

Our algorithm does not require rounding of the solution. Our proof states that if the statistical conditions are satisfied, our optimization problem will always return the integral ground truth $y^\ast$ as the solution.

The balanced clusters assumption is for clarity of presentation. We can relax the last constraint in \eqref{hopt:mle} in the following subsection to $\sum_i y_i = k$, to allow for different cluster sizes. This does not break our analysis with the novel tensor primal and dual cones.

\subsection{Conic Relaxation}
We first consider a greedy approach to partition a $m$-HPM. Given an observed affinity tensor $W$, we try to find a labeling vector $y$, such that
$
\sum_{i_1 ,\ldots, i_m} W_{i_1,\ldots, i_m} y_{i_1} \ldots y_{i_m}
$
is maximized. Using tensor notations introduced in the previous sections, this can be cast as the following optimization problem
\begin{align}
\maximize_{y} \quad  &\langle W, y^{\otimes m}\rangle \,, \quad \nonumber \\
\st \quad &y \in \{+1, -1\}^n , \quad
\sum_i y_i = 0\,.
\label{hopt:mle}
\end{align}
Problem \eqref{hopt:mle} is nonconvex because of the constraint on $y$. The size of the space of possible $y$'s is exponential in terms of $n$. In fact, in the case of second order ($m=2$), the problem reduces to the Minimum Bisection problem, which is known to be NP-hard \citep{GAREY1976237}. 

To relax the problem we denote $Y = y^\om$. Note that every tensor diagonal element $Y_{i,\ldots, i}$ is always $1$ since $m$ is even. By Lemma \ref{lemma:tensor_innerprod}, $\langle Y, \onevct^{\otimes m} \rangle = (\onevct^\top y)^m = 0$. Thus \eqref{hopt:mle} can be rewritten in the following tensor form
\begin{align}
\maximize_{Y} \quad  &\langle W, Y\rangle  \,,\quad \nonumber\\
\st \quad &Y_{\sgmm} = 1 , \quad 
\langle Y, \onevct^{\otimes m} \rangle = 0 , \quad
Y = y^{\otimes m}\,.
\label{hopt:tensor}
\end{align}

The first constraint above ensures that all entries in tensor $Y$ with even number of repeating indices are set to $1$. For example, in the $m=4$ case, this leads to $Y_{1,1,1,1} = Y_{2,2,2,2} = Y_{1,1,2,2} = Y_{1,2,1,2} = \dots = 1$.
On the other hand, the last constraint in \eqref{hopt:tensor} is still nonconvex. We then substitute it with a tensor cone constraint
\begin{align}
\maximize_{Y} \quad  &\langle W, Y\rangle  \,,\quad \nonumber\\
\st \quad &Y_{\sgmm} = 1, \quad
\langle Y, \onevct^{\otimes m} \rangle = 0, \quad
Y \succeq_{\Vm} 0\,,
\label{hopt:primal}
\end{align}
where $\Vm$ is the \Vcone as defined in the previous section. 

Lemma \ref{lemma:Vconvexcone} tells that $\Vm$ is a convex cone, thus \eqref{hopt:primal} is a convex conic form problem. Furthermore it can be seen that $\Vm$ has a non-empty interior, and there always exists some strictly feasible $Y$'s for the problem. 
Lagrangian of \eqref{hopt:primal} is
$
L(Y,V,\eta,A) 
= -\inprod[W]{Y} + \inprod[V]{Y - \onevct^\om} 
+ \eta \inprod[Y]{\onevct^{\otimes m}}
- \inprod[Y]{A} 
= \langle -W+V+\eta \onevct^{\otimes m} - A, Y\rangle - \inprod[V]{\onevct^\om} \,,
$
where $V\in \Sm, \eta\in\R, A \succeq_{\Smp} 0$ are Lagrangian multipliers, subject to the constraint that $V_{\sgmc} = 0$.
Note that $\Smp$ is the dual cone of $\Vm$ by Lemma \ref{lemma:doubledual}. Taking the derivative of $L$ with respect to $Y$, we obtain 
$
\nabla_Y L = -W+V+\eta \onevct^{\otimes m} - A\,. 
$
Setting the derivative to $0$, we obtain $-W+V+\eta \onevct^{\otimes m} = A$. 
Since $A \succeq_{\Smp} 0$, we obtain that $\inf L = - \inprod[V]{\onevct^\om}$ if $V - W + \eta \onevct^{\otimes m}\succeq_{\Smp} 0$, and unbounded otherwise. This leads to the following dual problem
\begin{align}
\minimize_{V,\eta} \quad  &\inprod[V]{\onevct^\om}  \,,\quad \nonumber\\
\st \quad &V_{\sgmc} = 0 \,,\quad
V - W +\eta \onevct^{\otimes m} \succeq_{\Smp} 0 \,.
\label{hopt:dual}
\end{align}
Lemma \ref{lemma:convex_cone} tells that $\Smp$ is a convex cone, thus \eqref{hopt:dual} is also a convex conic form problem.

{We now examine the optimality condition of the primal problem \eqref{hopt:primal} and the dual problem \eqref{hopt:dual}.}
We first list the Karush–Kuhn–Tucker (KKT) conditions for a primal and dual pair $(Y, V, \eta, A)$ to be optimal.
\begin{align}
V - W +\eta \onevct^{\om} - A &= 0 \,, \tag{Stationarity} \label{kkt:stationarity} \\
Y_{\sgmm} = 1, \quad
\inprod[Y]{\onevct^\om} = 0, \quad
Y &\succeq_{\Vm} 0\,, \tag{Primal Feasibility} \label{kkt:pf} \\
V_{\sgmc} = 0,\quad  A &\succeq_{\Smp} 0\,, \tag{Dual Feasibility} \label{kkt:df} \\
\inprod[A]{Y} &= 0 \,. \tag{Complementary Slackness} \label{kkt:cs}
\end{align}

To guarantee $Y^\ast = y^{\ast\om}$ is an optimal solution to the primal problem \eqref{hopt:primal}, all KKT conditions need to be fulfilled.
First note that $Y^\ast$ fulfills \eqref{kkt:pf} trivially 
because $Y_{\sgmm}^\ast = 1$, $\inprod[Y^\ast]{\onevct^\om} = (\yast^\top \onevct)^m = 0$, and $Y^\ast$ is a rank-one tensor.
Next, combining \eqref{kkt:stationarity} and \eqref{kkt:cs}, we obtain that an optimal solution must fulfill
\begin{equation}
\inprod[V - W +\eta \onevct^{\om}]{Y^\ast} = 0 \,.
\label{opt:inprodzero}
\end{equation}
To fulfill \eqref{opt:inprodzero}, we can construct the dual variables $V^\ast, A^\ast, \eta^\ast$ as follows: 
$V^\ast_{i,\dots,i} = \sum_{i_2,\ldots, i_m} W_{i,i_2,\ldots, i_m} y_i^\ast y_{i_2}^\ast \ldots y_{i_m}^\ast$ for every $i\in [n]$, 
$V^\ast_{i_1,\dots,i_m} = 0$ for all other entries,
$A^\ast = V^\ast - W + \eta^\ast \onevct^{\om}$, and $\eta^\ast \to \infty$.
It remains to prove that our construction $(Y^\ast, V^\ast, \eta^\ast, A^\ast)$ fulfills \eqref{kkt:df} and \eqref{kkt:cs}. This gives us the following optimality condition. Proofs in this section can be found in Appendix \ref{appendix:proof_mainthm}.
\begin{lemma}[Optimality Condition]
The primal problem \eqref{hopt:primal} achieves KKT optimality, if 
\[
\inf_{\norm{u} = 1, u\perp \onevct} \inprod[V^\ast - W]{u^\om} \geq 0 \,.
\]
\label{lemma:opt_optimality}
\end{lemma}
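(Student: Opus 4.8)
The plan is to exhibit the tuple $(Y^\ast,V^\ast,\eta^\ast,A^\ast)$ constructed above as a KKT point by checking \eqref{kkt:stationarity}, \eqref{kkt:pf}, \eqref{kkt:df} and \eqref{kkt:cs} one at a time; only \eqref{kkt:df} and \eqref{kkt:cs} remain open, and the stated hypothesis is exactly what is needed to close them. Condition \eqref{kkt:stationarity} holds by the very definition $A^\ast := V^\ast - W + \eta^\ast\onevct^\om$. Condition \eqref{kkt:pf} has already been verified in the text: $Y^\ast=y^{\ast\om}$ has $Y^\ast_{\sgmm}=1$, satisfies $\inprod[Y^\ast]{\onevct^\om}=(\yast^\top\onevct)^m=0$ by Lemma~\ref{lemma:tensor_innerprod} and the balancedness $\onevct^\top\yast=0$, and lies in $\Vm$ since it is rank-one and $\Um\subseteq\Vm$. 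The first half of \eqref{kkt:df}, namely $V^\ast_{\sgmc}=0$, is automatic: $V^\ast$ is supported only on the diagonal tuples $(i,\dots,i)$, and each such tuple belongs to $\sgmm$ (choose $i_1=\dots=i_{m/2}=i$), so no entry outside $\sgmm$ is touched.

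For complementary slackness I would expand $\inprod[A^\ast]{Y^\ast}=\inprod[V^\ast]{Y^\ast}-\inprod[W]{Y^\ast}+\eta^\ast\inprod[\onevct^\om]{Y^\ast}$. The last term is $\eta^\ast(\onevct^\top\yast)^m=0$ by balancedness. For the first term, $V^\ast$ is diagonal and $Y^\ast_{i,\dots,i}=(y^\ast_i)^m=1$ because $m$ is even, so $\inprod[V^\ast]{Y^\ast}=\sum_{i=1}^n V^\ast_{i,\dots,i}=\sum_{i=1}^n\sum_{i_2,\dots,i_m}W_{i,i_2,\dots,i_m}y^\ast_i y^\ast_{i_2}\cdots y^\ast_{i_m}=\inprod[W]{y^{\ast\om}}=\inprod[W]{Y^\ast}$, which cancels the middle term; hence $\inprod[A^\ast]{Y^\ast}=0$ and \eqref{kkt:cs} holds.

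The main obstacle is the remaining half of \eqref{kkt:df}: $A^\ast=V^\ast-W+\eta^\ast\onevct^\om\succeq_{\Smp}0$, that is, $\inprod[A^\ast]{u^\om}\ge0$ for all $u$; by homogeneity of degree $m$ in $u$ (with $m$ even) it suffices to check unit $u$. Writing $u=\alpha\,\onevct/\sqrt n+\beta w$ with $w\perp\onevct$, $\norm w=1$, $\alpha^2+\beta^2=1$, we get $\inprod[A^\ast]{u^\om}=\inprod[V^\ast-W]{u^\om}+\eta^\ast\alpha^m n^{m/2}$. If $\alpha=0$ this equals $\inprod[V^\ast-W]{w^\om}\ge0$ by hypothesis; if $\alpha\ne0$ then $\alpha^m n^{m/2}>0$ since $m$ is even, while $\abs{\inprod[V^\ast-W]{u^\om}}\le\normf{V^\ast-W}$ for unit $u$ by Lemma~\ref{lemma:normineq} (applied to $\pm(V^\ast-W)$), so letting $\eta^\ast\to\infty$ drives the expression nonnegative. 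The delicate point is turning this pointwise reasoning into uniform nonnegativity over the whole unit sphere as $\eta^\ast\to\infty$: I would argue by compactness, using that $u\mapsto\inprod[V^\ast-W]{u^\om}$ is continuous and bounded below by $0$ on the closed set $\{\norm u=1,\,u\perp\onevct\}$, so that any unit $u$ with $\inprod[V^\ast-W]{u^\om}<0$ must have $\abs{\onevct^\top u}$ bounded away from $0$, making a sufficiently large (limiting) $\eta^\ast$ work simultaneously for all such $u$. With all four KKT conditions in place, $(Y^\ast,V^\ast,\eta^\ast,A^\ast)$ is primal-dual optimal for the convex conic program \eqref{hopt:primal}, which is the claim.
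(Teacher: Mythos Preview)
Your proof is correct and follows essentially the same route as the paper: verify each KKT condition for the constructed tuple $(Y^\ast,V^\ast,\eta^\ast,A^\ast)$, handling dual feasibility by splitting into the case $u\perp\onevct$ (covered by the hypothesis) and the case $u\not\perp\onevct$ (where the $\eta^\ast(\onevct^\top u)^m$ term dominates a uniform Frobenius-norm lower bound on $\inprod[V^\ast-W]{u^\om}$). You are in fact slightly more explicit than the paper in checking $V^\ast_{\sgmc}=0$ and in flagging the uniformity subtlety behind the limiting $\eta^\ast\to\infty$, but the structure and ideas are identical.
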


The KKT conditions, once fulfilled, guarantee that $Y^\ast = \yyast$ is an optimal solution to the primal problem. 
However there could exist other sets of primal and dual variables satisfy all KKT conditions above. To illustrate this, we construct a set of example primal and dual variables $(\tilde{Y}, \tilde{V}, \tilde{\eta}, \tilde{A})$ as follows: $\tilde{Y} := \onevct^\om$ is the all-one tensor, $\tilde{V}_{i,\dots,i} := \sum_{i_2,\ldots, i_m} W_{i,i_2,\ldots, i_m} $ for every $i\in [n]$, $\tilde{\eta} \to \infty$, and $\tilde{A} := \tilde{V} - W + \tilde{\eta} \onevct^\om$.
One can verify that $(\tilde{Y}, \tilde{V}, \tilde{\eta}, \tilde{A})$ fulfill all KKT conditions above, and as a result, $\tilde{Y} = \onevct^\om$ is an optimal solution to the primal problem, which is undesirable from the perspective of recovery.

In our analysis, for simplicity we define the combinatorial function
\[
f(m,l,k) := \sum_{s = \max(0, k-l)}^{\min(k, m-l)} (-1)^{s} \binom{l}{k-s} \binom{m-l}{s}\,,
\]
and
\[
F(m, p) := \min
\begin{cases}
  \sum_{l=1}^{m} \binom{m-1}{l-1}  \sum_{k=0}^m p_k f(m,l,k) \\
  \sum_{l=0}^{m-1} \binom{m-1}{l}  \sum_{k=0}^m p_k f(m,l,k)\,.
\end{cases}   
\]
In particular, note that $F(m,p)$ is a function of model order $m$ and the parameter vector $p$, and it characterizes the signal / noise level of the underlying model.
We also denote 
$$
\lambda_{\onevct}(A)
=
\inf_{\norm{u} = 1, u \perp \onevct, u\nshortparallel \yast} \inprod[A]{u^\om} \,,
$$ 
where $u$ cannot be a multiple of $y^\ast$.
To ensure that $Y^\ast = \yyast$ is the \emph{unique} optimal solution to \eqref{hopt:primal} and eliminate all other undesirable solutions, we present the following lemma about uniqueness.

\begin{lemma}[Uniqueness Condition]
The primal problem \eqref{hopt:primal} achieves exact recovery and returns the unique optimal solution $Y^\ast = \yyast$, if 
\begin{equation}
\lambda_{\onevct} ({V^\ast - W }) > 0 \,.
\label{hopt:l2}
\end{equation}
\label{lemma:opt_uniqueness}
\end{lemma}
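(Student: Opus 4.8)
The plan is to split the argument into two phases: first confirm that $Y^\ast=\yyast$ is itself optimal for \eqref{hopt:primal}, and then show no other feasible point attains the optimum. For the first phase I would note that $\inprod[\Vast-W]{\yyast}=0$ by \eqref{opt:inprodzero}, so every unit $u$ that is a multiple of $\yast$ gives $\inprod[\Vast-W]{u^\om}=0$, while every unit $u\perp\onevct$ with $u\nshortparallel\yast$ gives $\inprod[\Vast-W]{u^\om}\ge\lambda_{\onevct}(\Vast-W)>0$ by hypothesis \eqref{hopt:l2}. Since $\yast\perp\onevct$ (balanced clusters), both kinds of directions occur in the feasible set of $\inf_{\norm{u}=1,\,u\perp\onevct}\inprod[\Vast-W]{u^\om}$ and all the values are $\ge 0$, so this infimum is $\ge 0$ and Lemma~\ref{lemma:opt_optimality} gives that $Y^\ast$ is KKT-optimal. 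The main work is the uniqueness claim.

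For uniqueness I would take an arbitrary optimal $\tilde Y$ of \eqref{hopt:primal} and show $\tilde Y=Y^\ast$. Since $\tilde Y\succeq_{\Vm}0$, write $\tilde Y=\sum_{i=1}^M(u^{(i)})^\om$. The first key step is to upgrade the \emph{scalar} constraint $\inprod[\tilde Y]{\onevct^\om}=0$ to a per-component statement: because $m$ is even, $\inprod[\tilde Y]{\onevct^\om}=\sum_i(\onevct^\top u^{(i)})^m$ is a sum of nonnegative terms, so each $u^{(i)}\perp\onevct$. The second step is to show $\inprod[\Vast-W]{\tilde Y}=0$ \emph{without} appealing to strong duality, by using that $\Vast$ is supported only on the fully-repeated diagonal entries: together with $\tilde Y_{\sgmm}=1$, which pins every diagonal entry of $\tilde Y$ to $1$, this gives $\inprod[\Vast]{\tilde Y}=\sum_i\Vast_{i,\dots,i}=\inprod[\Vast]{\yyast}$ (using $(\yast_i)^m=1$); combining with $\inprod[W]{\tilde Y}=\inprod[W]{\yyast}$ (equal optimal values) and $\inprod[\Vast-W]{\yyast}=0$ yields $\inprod[\Vast-W]{\tilde Y}=0$. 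The third step is to expand $0=\inprod[\Vast-W]{\tilde Y}=\sum_i\norm{u^{(i)}}^m\inprod[\Vast-W]{(u^{(i)}/\norm{u^{(i)}})^\om}$: each summand is $0$ when its direction is parallel to $\yast$ (by \eqref{opt:inprodzero}) and is $\ge\lambda_{\onevct}(\Vast-W)>0$ otherwise (using $u^{(i)}\perp\onevct$ from the first step), so $\lambda_{\onevct}(\Vast-W)>0$ forces $u^{(i)}=0$ for every component not parallel to $\yast$. Writing the surviving components as $u^{(i)}=c_i\yast$, we get $\tilde Y=\bigl(\sum_i c_i^m\bigr)\yyast=\beta\yyast$ with $\beta\ge0$, and the constraint $\tilde Y_{1,\dots,1}=\beta(\yast_1)^m=\beta=1$ forces $\tilde Y=\yyast=Y^\ast$.

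The step I expect to be the crux is the passage from the single equality $\inprod[\tilde Y]{\onevct^\om}=0$ to orthogonality of \emph{every} rank-one summand $u^{(i)}$ to $\onevct$: this is exactly where evenness of $m$ (hence nonnegativity of the $m$-th powers) is indispensable, and it is what lets the tensor-eigenvalue bound $\lambda_{\onevct}(\Vast-W)>0$, which lives on $\onevct^\perp$, act on each component separately. A second delicate point, smaller but worth handling carefully, is obtaining $\inprod[\Vast-W]{\tilde Y}=0$ cleanly; rather than using complementary slackness with the limiting multiplier $\eta^\ast\to\infty$, it is simplest to exploit that $\Vast$ is a diagonal tensor so that $\inprod[\Vast]{\tilde Y}$ does not depend on $\tilde Y$ at all once its diagonal is fixed. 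Everything else is bookkeeping: the $\Vm$-decomposition underlying Lemma~\ref{lemma:Vconvexcone}, and the observation that a multiple of $\yast$ is the only null direction of $\Vast-W$ on $\onevct^\perp$.
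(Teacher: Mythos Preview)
Your proof is correct and follows the same two-phase outline as the paper: first invoke Lemma~\ref{lemma:opt_optimality} to certify that $Y^\ast$ is optimal, then argue uniqueness by showing that any other optimal $\tilde Y$ must coincide with $\yyast$. The paper's own argument for uniqueness is extremely terse: it simply asserts that the equality $\inprod[\Vast-W]{\cdot}=0$ should hold only for multiples of $\yyast$ and that the constraint $Y_{\sgmm}=1$ rules out nontrivial multiples, without ever writing down the $\Vm$-decomposition of a competing optimum or explaining why the hypothesis acts on each rank-one summand. Your proposal supplies exactly those missing steps --- the decomposition $\tilde Y=\sum_i (u^{(i)})^{\om}$, the upgrade of the single constraint $\inprod[\tilde Y]{\onevct^\om}=0$ to per-component orthogonality via evenness of $m$, and the termwise application of $\lambda_{\onevct}(\Vast-W)>0$ --- so in that sense you are not deviating from the paper but rather completing it. The one place you genuinely depart is in obtaining $\inprod[\Vast-W]{\tilde Y}=0$: the paper implicitly leans on complementary slackness with the constructed dual (which is awkward because $\eta^\ast\to\infty$), whereas you sidestep this by observing that $\Vast$ is diagonal and $\tilde Y_{i,\dots,i}=1$, so $\inprod[\Vast]{\tilde Y}$ is fixed independently of $\tilde Y$, and then equal primal values give $\inprod[W]{\tilde Y}=\inprod[W]{\yyast}$. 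That route is cleaner and avoids any limiting argument in the dual.
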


\subsection{Statistical Conditions of Exact Partitioning}
In this section we analyse the regime in which \eqref{hopt:l2} holds with probability tending to $1$. 
Here we present our main theorem on the statistical conditions of exact partitioning. 
\begin{theorem}
Consider any model $\mathcal{M}$ sampled from class $m$-HPM$(n,p,\sigma^2,B)$ with the assumption of $F(m,p) > 0$. If $\frac{(2^{1-m} F(m,p) - p_0)^2}{B^2} = \Omega\left(\frac{\log n}{n}\right)$,  and $\frac{(2^{1-m} F(m,p) - p_0)^2}{\sigma^2} = \Omega\left(n^{1-m}\right)$, then conic form problem \eqref{hopt:primal} partitions the groups correctly, i.e., the true group assignment $y^\ast$ is the optimal solution of \eqref{hopt:primal}, with probability at least $1 - O(1/n)$.
\label{thm:main}
\end{theorem}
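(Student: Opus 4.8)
I would prove the theorem by instantiating the primal--dual certificate $(Y^\ast,\Vast,\eta^\ast,A^\ast)$ constructed just before Lemma~\ref{lemma:opt_uniqueness} and verifying its hypothesis. By Lemma~\ref{lemma:opt_uniqueness} it suffices to show that, with probability $1-O(1/n)$,
\[
\inprod[\Vast-W]{u^{\om}}>0 \qquad\text{for every unit }u\perp\onevct\text{ that is not a scalar multiple of }\yast .
\]
I would split $\Vast-W=\Expect{\Vast-W}+R$, with $R:=(\Vast-\Expect{\Vast})-(W-\Expect{W})$, bound the deterministic ``signal'' $\inprod[\Expect{\Vast-W}]{u^{\om}}$ from below and the random ``noise'' $\inprod[R]{u^{\om}}$ from above, uniformly over the admissible $u$, and then show the signal wins under the stated conditions.

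For the signal: using \ref{MP1} and Lemma~\ref{lemma:tensor_innerprod}, $\Expect{W}_{i_1,\dots,i_m}$ depends only on the number $l$ of indices lying in the $+1$ group (this is where the kernel $f(m,l,k)$ enters), and for $u\perp\onevct$ every summand with $k\ge1$ drops out since $\onevct^\top u=0$, leaving $\inprod[\Expect{W}]{u^{\om}}=p_0(\yast^\top u)^{m}$. The certificate's $\Vast$ is the diagonal tensor with $\Vast_{i,\dots,i}=y^\ast_i\inprod[W]{e_i\otimes\yast^{\otimes(m-1)}}$, so carrying out the sum over $(i_2,\dots,i_m)$ with \ref{MP1} gives $\Expect{\Vast}_{i,\dots,i}=2^{1-m}n^{m-1}F_{g(i)}$, where $F_+$ and $F_-$ are exactly the two expressions in the $\min$ defining $F(m,p)$ (the weights $\binom{m-1}{l-1}$ versus $\binom{m-1}{l}$ come from whether node $i$ lies in group $+1$ or $-1$). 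Hence
\[
\inprod[\Expect{\Vast-W}]{u^{\om}}=\sum_i \Expect{\Vast}_{i,\dots,i}\,u_i^{m}-p_0(\yast^\top u)^{m},
\]
and, using $F(m,p)>0$ and the H\"older bound $(\yast^\top u)^{m}\le n^{m-1}\sum_i u_i^{m}$ (together with its stability when $u$ is far from $\pm\yast$), one gets a strictly positive lower bound for $u$ outside a thin cone around $\pm\yast/\sqrt n$; inside that cone, where both terms vanish at $u=\pm\yast/\sqrt n$, one expands to second order in the perturbation, the leading coefficient being of order $n^{m/2}\,|2^{1-m}F(m,p)-p_0|$.

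For the noise: the structural point is that both parts of $R$ are linear functionals of $W-\Expect{W}$ with a common coefficient tensor, namely $\inprod[R]{u^{\om}}=-\inprod[W-\Expect{W}]{T_u}$ with $T_u=u^{\om}-v(u)\otimes\yast^{\otimes(m-1)}$, $v(u)_i=y^\ast_i u_i^{m}$ (using $\Vast_i-\Expect{\Vast}_{i,\dots,i}=y^\ast_i\inprod[W-\Expect{W}]{e_i\otimes\yast^{\otimes(m-1)}}$); moreover $T_u$ vanishes at $u=\pm\yast/\sqrt n$ and its first variation there symmetrizes to zero, so $\inprod[R]{u^{\om}}$ is likewise second-order small near that direction. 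I would bound $|\inprod[W-\Expect{W}]{T_u}|\le\normf{W-\Expect{W}}\,\normf{T_u}$, controlling $\normf{W-\Expect{W}}\le\sigma\sqrt n$ with probability $\ge1-1/n$ via \ref{MP2} and Markov --- this handles ``delocalized'' $u$ --- while for directions concentrated on few coordinates one instead controls $\max_i|\Vast_i-\Expect{\Vast}_{i,\dots,i}|$ by a Bernstein/Hoeffding-type estimate on a sum of $n^{m-1}$ entries of absolute value at most $B$ (via \ref{MP3}), a union bound over the $n$ coordinates producing the factor $O(B\sqrt{n^{m-1}\log n})$ --- this handles ``localized'' $u$. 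Matching the signal against the delocalized noise scale yields $\frac{(2^{1-m}F(m,p)-p_0)^2}{\sigma^2}=\Omega(n^{1-m})$, and against the localized noise scale $\frac{(2^{1-m}F(m,p)-p_0)^2}{B^2}=\Omega(\log n/n)$; a union bound over the $O(1)$ concentration events gives probability $1-O(1/n)$, and Lemma~\ref{lemma:opt_uniqueness} then finishes the proof.

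The main obstacle is the uniform positivity of $\inprod[\Vast-W]{u^{\om}}$ near the excluded direction $u=\yast/\sqrt n$: a crude triangle inequality is useless there, since $\inf_{u\perp\onevct}\inprod[\Expect{\Vast-W}]{u^{\om}}$ already vanishes at $u=\yast/\sqrt n$, so one must Taylor-expand both the signal and the noise to second order in the perturbation and prove that the signal's quadratic form dominates the noise's --- and it is precisely this comparison, split according to whether the perturbation is spread out or concentrated on a few coordinates, that forces the two separate statistical conditions. The algebra behind the closed forms for $\Expect{W}$ and $\Expect{\Vast}$ (relying on binomial identities such as $\sum_l\binom{m-1}{l-1}f(m,l,k)=2^{m-1}\,\mathbf{1}\{k\in\{0,1\}\}$) is routine but lengthy.
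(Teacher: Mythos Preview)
Your overall architecture matches the paper: invoke Lemma~\ref{lemma:opt_uniqueness}, decompose $\Vast-W$ into $\Expect{\Vast-W}$, $\Vast-\Expect{\Vast}$, and $-(W-\Expect{W})$, control the diagonal fluctuation with Hoeffding plus a union bound over the $n$ coordinates, and control the $W$-fluctuation through its Frobenius norm and Markov on \ref{MP2}. Those are exactly the paper's two concentration tools, and they produce the two stated conditions.

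Where you diverge is the machinery you build around the direction $u=\yast/\sqrt n$: the H\"older inequality, the thin-cone argument, the second-order Taylor expansion, the coefficient tensor $T_u$, and the localized/delocalized dichotomy. The paper does none of this. It bounds each of the three infima \emph{separately and uniformly} in $u$: Cauchy--Schwarz gives $p_0(\yast^\top u)^m\le p_0\,n^{m/2}$; the power-mean inequality $\sum_i u_i^m\ge n^{1-m/2}$ (for $\|u\|_2=1$, $m$ even) gives $\inprod[\Expect{\Vast}]{u^{\om}}\ge (n/2)^{m-1}F(m,p)\,n^{1-m/2}$; subtraction yields the constant lower bound $n^{m/2}\bigl(2^{1-m}F(m,p)-p_0\bigr)$ with no cone analysis at all. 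For the fluctuations, since $\Vast-\Expect{\Vast}$ is diagonal and $u_i^m\ge 0$ the paper uses $\inprod[\Vast-\Expect{\Vast}]{u^{\om}}\ge\min_i\bigl(\Vast_{i,\dots,i}-\Expect{\Vast_{i,\dots,i}}\bigr)$ directly, and Lemma~\ref{lemma:normineq} gives $\inprod[\Expect{W}-W]{u^{\om}}\ge -\normf{W-\Expect{W}}$ directly. Each noise bound is then compared to half of the signal constant, and the proof ends---no Taylor expansion, no $T_u$, no split by how spread out $u$ is.

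So your route would plausibly work but is substantially heavier than what the paper actually writes. Your observation that $\inprod[\Expect{\Vast-W}]{u^{\om}}$ vanishes at $u=\yast/\sqrt n$ is correct (indeed one can compute $F(m,p)=2^{m-1}(p_0-|p_1|)$, so $2^{1-m}F(m,p)-p_0=-|p_1|\le 0$), which means the paper's uniform signal constant is nonpositive and your second-order scheme is arguably the sounder path; but it is not the argument the paper gives, which simply treats that constant as if it were positive and proceeds with the crude three-term split.
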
 

\begin{proof}[Proof Sketch]
The proof can be broken down into two parts. Starting from our dual construction as in Lemma \ref{lemma:opt_optimality} and \ref{lemma:opt_uniqueness}, the random tensor $V^\ast - W$ can be rewritten as $(V^\ast - \Expect{V^\ast}) - (W - \Expect{W}) + (\Expect{V^\ast} - \Expect{W})$. 
In the first part, we analyze the variational characterization of the expected tensor $\Expect{V^\ast - W }$. We show that $\lambda_{\onevct} (\Expect{V^\ast - W })$ can be bounded below by a quantity.
In the second part, we characterize the spectrum of the deviation tensor $V^\ast - \Expect{V^\ast}$ and $W - \Expect{W}$. 
Since the dual variable $V^\ast$ is constructed to be a diagonal tensor, the minimum tensor eigenvalue of $V^\ast - \Expect{V^\ast}$ is related to the smallest element in its diagonal.
For $W - \Expect{W}$, by Lemma \ref{lemma:normineq}, the maximum tensor eigenvalue is related to its Frobenius norm. 
At the end, our goal is to ensure that $\lambda_{\onevct} ({V^\ast - W })$ is greater than $0$ with high probability. This gives us the statistical conditions in terms of $n, p, \sigma$ and $B$, and since $\lambda_{\onevct} ({V^\ast - W }) > 0$, Lemma \ref{lemma:opt_optimality} and \ref{lemma:opt_uniqueness} guarantee exact recovery through solving the convex primal problem \eqref{hopt:primal}.   
\end{proof}

{Theorem \ref{thm:main} provides the sufficient statistical conditions for the high-order exact partitioning problem.} 
Our proof in Theorem \ref{thm:main} says once the statistical conditions are satisfied, optimization problem \eqref{hopt:primal} will return the integral ground truth $Y^\ast = y^{\ast\otimes m}$ with high probability, where $y^\ast$ is the groundtruth in Definition \ref{def:modelclass}, as well as the optimal solution to problem \eqref{hopt:mle}. This means our tensor cone relaxation from \eqref{hopt:mle} to \eqref{hopt:primal} works effectively.


\section{Discussions}
\label{section:discussions}
It is now the best time to discuss the requirement of $m$ being a positive even integer in the previous section. It is known that there exists no odd-degree nonnegative homogeneous polynomial, and there exists no non-zero odd-order positive semidefinite tensor \citep{yuan2014some}. That is, if $m$ is odd, the cone $\Smp = \{\zeromtx\}$ and therefore its dual cone $\Vm = (\Smp)^\ast = \{\zeromtx\}^* = \Sm$, the space of all symmetric tensors. 
Thus, the requirement of $m$ being a positive even integer is necessary for the convex relaxation.

Our analysis in Section \ref{section:analysis} requires the optimal solution to be unique. A natural question is: why is uniqueness important? The reason is that our models are \emph{generative}. In other words, the true group assignment vector $y^\ast$, selected by nature, generates the observed affinity tensor $W$. From an optimization point of view, it is possible that there exists multiple distinct optimal solutions to problem \eqref{hopt:primal}, however we are only interested in the groundtruth $y^\ast$ which generates the model. 

Our analysis in Section \ref{section:cone} and \ref{section:analysis} focuses on the \Vcone $\Vm$ and the positive semidefinite tensor cone $\Smp$. In the tensor literature another commonly used cone is the \emph{Sum-of-Squares (SoS)} cone. 
It is known that for any $m \geq 4$ and $n \geq 2$, the three cones fulfill $\Vm \subset \text{SoS} \subset \Smp$ \citep{luo2015linear}.

We believe the NP-hardness of checking whether a symmetric tensor is in the \Vcone is an open problem. It is known that many tensor problems are NP-hard, for example, \citet[{Theorem 11.2}]{hillar2013most} points out that deciding whether a symmetric $4$-th order is positive semidefinite is NP-hard. One can see that our \Vcone is a subset of the positive semidefinite tensor cone, but not vice versa, which means that the problem of determining membership of $\Vm$ is not harder than the problem of determining membership of $\Smp$. To the best of our knowledge, it remains unknown if conic form programs with the \Vcone constraint can be solved efficiently. 
Furthermore, even if we relax the constraint $Y \succeq_{\Vm} 0$  in \eqref{hopt:primal} to $Y \succeq_{\Smp} 0$, the argument in \citet{hillar2013most} will not work and NP-hardness remains an open problem. This is because of our constraint $Y_{\sgmm} = 1$ in the primal problem. More details about NP-hardness can be found in Appendix \ref{appendix:nphardness}.

We want to highlight that the utility of our novel relaxation procedure is as a proof technique. From a theoretic point of view, it allows us to apply convex optimization tools to characterize the statistical upper limits of exact partitioning for this class of tensor problems. 
Moreover, convexity introduces new optimization methods, including projected gradient ascent and barrier functions, and thus approximation or randomized algorithms remain possible. This could be a future direction on this problem.
In Appendix \ref{appendix:heuristic} we also provide experimental validation of our theorem by using a projected gradient descent solver.

\begin{figure*}[ht!]
\centering
\includegraphics[width=0.9\textwidth]{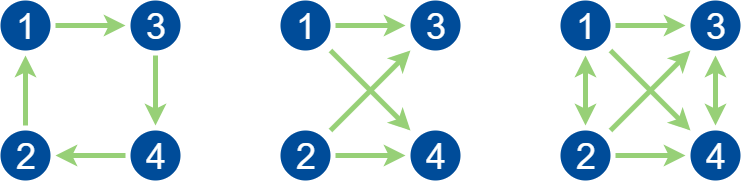}
\caption{Examples of $4$-vertex motifs. \textbf{Left:} The cycle motif. \textbf{Middle:} The ``big-fan'' motif that can be found in neural networks \citep{benson2016higher}, where edges represent information propagation between layers. \textbf{Right:} The food chain motif \citep{li2017inhomogeneous}, where edges represent energy flow between species.}
\label{fig:motif}
\end{figure*}

\section{High-order Example Models}
In this section, we introduce several high-order models motivated from prior literature.
We also show these example models belong to the class of $m$-HPMs by Definition \ref{def:modelclass} in Appendix \ref{appendix:examplemodels}. 
It is worth mentioning that there is no prior theoretical work on exact partitioning in high-order models. We are providing the first results for exact partitioning in models with high-order interactions with provable theoretical guarantees. Our Example Models serve to motivate the generative models, but none of those papers contain any theoretical statistical analysis of exact partitioning.


We first consider the \emph{high-order counting model} motivated from \citep{zhou2007learning}. Suppose there exists a co-authorship network consisting of computer scientists and biologists, and every paper has $m$ authors. On average, authors from the same discipline collaborate more than those from different backgrounds. The task is to identify the two groups of researchers, given the number of publications of each $m$-tuple. Naturally the co-authorship network can be modeled as a high-order counting problem.
Aside from the example above, high-order counting models can be helpful in many complex application problems as pairwise models often lose high-order information, for example, categorical data \citep{gibson2000clustering}, molecular biology \citep{zhang2007random}, and image segmentation \citep{agarwal2005beyond}. 
Next we present a generative model for high-order counting models.

\begin{example}[High-order Counts]
Let $\mathcal{G} = (V,m,\alpha,T)$ be a high-order counting model with vertex set $V$ and order $m$.
$\alpha = (\alpha_0, \ldots, \alpha_{m/2}) \in [0,1]^{m/2+1}$ is the counting parameter vector, and $T \in \mathbb{N}$ is a counting parameter.
Nature generates random counts for $\mathcal{G}$ in the following way. For each $m$-tuple $(v_{i_1},\ldots,v_{i_m}) \subset V$, count the group membership of the vertices. Without loss of generality assume $l$ vertices are from the same group, where $l \in \{0,\ldots,m/2\}$. Nature then samples the corresponding count from binomial distribution: $c(v_{l_1},\ldots,v_{l_m}) \sim \text{\sc Bin} (T, \alpha_l)$. 
We are interested in identifying the group membership of vertices from the observed count information $c(v_{l_1},\ldots,v_{l_m})$. 
\label{eg:count} 
\end{example}


Note that when $T = 1$, the high-order counting model can be interpreted as a random $m$-uniform hypergraph. As a result one can define \emph{hypergraph cuts} as generalization of regular pairwise graph cuts \citep{hein2013total, benson2016higher}. Using a similar approach one can generalize other notions from graph theory, including \emph{clique}, \emph{volume} and \emph{conductance}. Hypergraph cuts have been found useful in tasks dealing with complex networks, for example, video object segmentation \citep{huang2009video}, clustering animals in a zoo dataset using categorical data \citep{zhou2007learning}, among others.
Next we present a generative model for hypergraph cut models.

\begin{example}[Hypergraph Cuts]
Let $G = (V,m=4,\alpha, T=1)$ be a random $m$-uniform hypergraph generated from Model \ref{eg:count}, and let $H$ denote the hyperedge set. For each $4$-tuple $(v_{i_1},v_{i_2},v_{i_3},v_{i_4}) \subset V $, we define its cut size $c(v_{i_1},v_{i_2},v_{i_3},v_{i_4})$, where $c(v_{i_1},v_{i_2},v_{i_3},v_{i_4}) = \sum_{e \in H} \onevct[v_{i_1} \in e \lor v_{i_2} \in e \lor v_{i_3} \in e \lor v_{i_4} \in e] 
\cdot \onevct[(v_{i_1},v_{i_2},v_{i_3},v_{i_4}) \neq e]$.
We are interested in inferring the group membership of vertices. Instead of observing the edge set $E$, we now only observe the cut sizes of every $4$-tuple $(v_{i_1},v_{i_2},v_{i_3},v_{i_4})$. 
\label{eg:cuts}
\end{example}

In the next example we are interested in the \emph{hypergraph minimum bisection} problem. It is well-known that the problem is NP-complete on both pairwise graphs \citep{garey1974some} and hypergraphs \citep{kahng1989fast}.

\begin{example}[Minimum Bisection]
Let $\mathcal{G} = (V,m,q,H)$ be a random $m$-uniform hypergraph with vertex set $V$ and order $m$. $q \in (0,1)$ is a activation parameter.
The hyperedge set $H$ starts empty and nature add hyperedges to $H$ in the following way. Recall that $y^\ast \in \{+1,-1\}^n$ is the group assignment vector. For each $m$-tuple $(v_{i_1},\ldots,v_{i_m}) \subset V$, nature first generates a temporary activation vector $b \in \{+1, -1\}^m$, such that $b_j = y^\ast_{i_j}$ with probability $1-q$, and $b_j = y^\ast_{i_j}$ with probability $q$. If $b_1 = \dots = b_m$, nature adds a hyperedge $(v_{i_1},\ldots,v_{i_m})$ to $H$. Nature then discards the value of $b$, and repeats the process for other $m$-tuples.
We are interested in identifying the group membership of vertices from the hyperedge set $H$.
\label{eg:bisection}
\end{example}


In the next example we investigate \emph{motif models}. Motifs are simple network subgraphs and building blocks of many complex networks \citep{benson2016higher, yaverouglu2014revealing, milo2002network}. Researchers have utilized motifs to explore higher-order patterns and insights in complex systems, such as social networks \citep{juszczyszyn2008local}, air traffic patterns \citep{rosvall2014memory}, and food webs \citep{li2017inhomogeneous, benson2016higher}. Motifs are powerful tools to represent higher-order interaction patterns of multiple entities. Figure \ref{fig:motif} illustrates three distinct motifs of size $4$.

We now present a generative model for motif models. 
\begin{example}[Motif Clustering]
Let $G = (V,\alpha,E,M,H)$ be a directed random graph, such that the vertices $V$ are drawn from two groups $S_1$ and $S_2$. $\alpha = (\alpha_{1,1}, \alpha_{2,2}, \alpha_{1,2}, \alpha_{2,1}) \in [0,1]^{4}$ is a probability parameter vector. The edge set $E$ starts empty and nature add edges to E in the following way. For each pair $(v_1, v_2) \subset V$, 
if $v_1, v_2 \in S_1$, nature adds a directed edge $(v_1,v_2)$ to $E$ with probability $\alpha_{1,1}$; 
if $v_1, v_2 \in S_2$, nature adds a directed edge $(v_1,v_2)$ to $E$ with probability $\alpha_{2,2}$;
if $v_1 \in S_1, v_2 \in S_2$, nature adds a directed edge $(v_1,v_2)$ to $E$ with probability $\alpha_{1,2}$; 
otherwise nature adds a directed edge $(v_1,v_2)$ to $E$ with probability $\alpha_{2,1}$.
$M$ is a $m$-vertex motif of interest, and $H$ is the set of observed motifs. $H$ starts empty. For each $m$-tuple $(i_1, \ldots, i_m) \subset V$, nature adds $(i_1, \ldots, i_m)$ to $H$ if the tuple $(i_1, \ldots, i_m)$ forms the motif $M$ exactly (no extra edges allowed).
We are interested in inferring the group of vertices from the set of observed motifs $H$. 
\label{eg:motif}
\end{example}

\bibliography{0_main.bib}

\begin{thebibliography}{35}
\providecommand{\natexlab}[1]{#1}
\providecommand{\url}[1]{\texttt{#1}}
\expandafter\ifx\csname urlstyle\endcsname\relax
  \providecommand{\doi}[1]{doi: #1}\else
  \providecommand{\doi}{doi: \begingroup \urlstyle{rm}\Url}\fi

\bibitem[Agarwal et~al.(2005)Agarwal, Lim, Zelnik-Manor, Perona, Kriegman, and
  Belongie]{agarwal2005beyond}
S.~Agarwal, J.~Lim, L.~Zelnik-Manor, P.~Perona, D.~Kriegman, and S.~Belongie.
\newblock Beyond pairwise clustering.
\newblock In \emph{2005 IEEE Computer Society Conference on Computer Vision and
  Pattern Recognition (CVPR'05)}, volume~2, pages 838--845. IEEE, 2005.

\bibitem[Ballico(2014)]{ballico2014upper}
E.~Ballico.
\newblock An upper bound for the real tensor rank and the real symmetric tensor
  rank in terms of the complex ranks.
\newblock \emph{Linear and Multilinear Algebra}, 62\penalty0 (11):\penalty0
  1546--1552, 2014.

\bibitem[Benson et~al.(2016)Benson, Gleich, and Leskovec]{benson2016higher}
A.~R. Benson, D.~F. Gleich, and J.~Leskovec.
\newblock Higher-order organization of complex networks.
\newblock \emph{Science}, 353\penalty0 (6295):\penalty0 163--166, 2016.

\bibitem[Berkhin(2006)]{berkhin2006survey}
P.~Berkhin.
\newblock A survey of clustering data mining techniques.
\newblock In \emph{Grouping multidimensional data}, pages 25--71. Springer,
  2006.

\bibitem[Burer(2012)]{burer2012copositive}
S.~Burer.
\newblock Copositive programming.
\newblock In \emph{Handbook on semidefinite, conic and polynomial
  optimization}, pages 201--218. Springer, 2012.

\bibitem[Cai et~al.(2015)Cai, Gong, Ma, Ruan, Yuan, and Jiao]{cai2015greedy}
Q.~Cai, M.~Gong, L.~Ma, S.~Ruan, F.~Yuan, and L.~Jiao.
\newblock Greedy discrete particle swarm optimization for large-scale social
  network clustering.
\newblock \emph{Information Sciences}, 316:\penalty0 503--516, 2015.

\bibitem[Comon et~al.(2008)Comon, Golub, Lim, and Mourrain]{comon2008symmetric}
P.~Comon, G.~Golub, L.-H. Lim, and B.~Mourrain.
\newblock Symmetric tensors and symmetric tensor rank.
\newblock \emph{SIAM Journal on Matrix Analysis and Applications}, 30\penalty0
  (3):\penalty0 1254--1279, 2008.

\bibitem[Garey et~al.(1976)Garey, Johnson, and Stockmeyer]{GAREY1976237}
M.~Garey, D.~Johnson, and L.~Stockmeyer.
\newblock Some simplified np-complete graph problems.
\newblock \emph{Theoretical Computer Science}, 1\penalty0 (3):\penalty0 237 --
  267, 1976.
\newblock ISSN 0304-3975.
\newblock \doi{https://doi.org/10.1016/0304-3975(76)90059-1}.
\newblock URL
  \url{http://www.sciencedirect.com/science/article/pii/0304397576900591}.

\bibitem[Garey et~al.(1974)Garey, Johnson, and Stockmeyer]{garey1974some}
M.~R. Garey, D.~S. Johnson, and L.~Stockmeyer.
\newblock Some simplified np-complete problems.
\newblock In \emph{Proceedings of the sixth annual ACM symposium on Theory of
  computing}, pages 47--63, 1974.

\bibitem[Gibson et~al.(2000)Gibson, Kleinberg, and
  Raghavan]{gibson2000clustering}
D.~Gibson, J.~Kleinberg, and P.~Raghavan.
\newblock Clustering categorical data: An approach based on dynamical systems.
\newblock \emph{The VLDB Journal—The International Journal on Very Large Data
  Bases}, 8\penalty0 (3-4):\penalty0 222--236, 2000.

\bibitem[Hagen and Kahng(1992)]{hagen1992new}
L.~Hagen and A.~B. Kahng.
\newblock New spectral methods for ratio cut partitioning and clustering.
\newblock \emph{IEEE transactions on computer-aided design of integrated
  circuits and systems}, 11\penalty0 (9):\penalty0 1074--1085, 1992.

\bibitem[Han(2013)]{han2013unconstrained}
L.~Han.
\newblock An unconstrained optimization approach for finding real eigenvalues
  of even order symmetric tensors.
\newblock \emph{Numerical Algebra, Control \& Optimization}, 3\penalty0
  (3):\penalty0 583, 2013.

\bibitem[Hein et~al.(2013)Hein, Setzer, Jost, and Rangapuram]{hein2013total}
M.~Hein, S.~Setzer, L.~Jost, and S.~S. Rangapuram.
\newblock The total variation on hypergraphs-learning on hypergraphs revisited.
\newblock In \emph{Advances in Neural Information Processing Systems}, pages
  2427--2435, 2013.

\bibitem[Hillar and Lim(2013)]{hillar2013most}
C.~J. Hillar and L.-H. Lim.
\newblock Most tensor problems are np-hard.
\newblock \emph{Journal of the ACM (JACM)}, 60\penalty0 (6):\penalty0 1--39,
  2013.

\bibitem[Huang et~al.(2012)Huang, Chuang, and Chen]{huang2012affinity}
H.-C. Huang, Y.-Y. Chuang, and C.-S. Chen.
\newblock Affinity aggregation for spectral clustering.
\newblock In \emph{2012 IEEE Conference on Computer Vision and Pattern
  Recognition}, pages 773--780. IEEE, 2012.

\bibitem[Huang et~al.(2009)Huang, Liu, and Metaxas]{huang2009video}
Y.~Huang, Q.~Liu, and D.~Metaxas.
\newblock ] video object segmentation by hypergraph cut.
\newblock In \emph{2009 IEEE Conference on Computer Vision and Pattern
  Recognition}, pages 1738--1745. IEEE, 2009.

\bibitem[Juszczyszyn et~al.(2008)Juszczyszyn, Kazienko, and
  Musia{\l}]{juszczyszyn2008local}
K.~Juszczyszyn, P.~Kazienko, and K.~Musia{\l}.
\newblock Local topology of social network based on motif analysis.
\newblock In \emph{International Conference on Knowledge-Based and Intelligent
  Information and Engineering Systems}, pages 97--105. Springer, 2008.

\bibitem[Kahng(1989)]{kahng1989fast}
A.~B. Kahng.
\newblock Fast hypergraph partition.
\newblock In \emph{Proceedings of the 26th ACM/IEEE Design Automation
  Conference}, pages 762--766, 1989.

\bibitem[Leordeanu and Sminchisescu(2012)]{leordeanu2012efficient}
M.~Leordeanu and C.~Sminchisescu.
\newblock Efficient hypergraph clustering.
\newblock In \emph{Artificial Intelligence and Statistics}, pages 676--684,
  2012.

\bibitem[Li and Milenkovic(2017)]{li2017inhomogeneous}
P.~Li and O.~Milenkovic.
\newblock Inhomogeneous hypergraph clustering with applications.
\newblock In \emph{Advances in Neural Information Processing Systems}, pages
  2308--2318, 2017.

\bibitem[Liu et~al.(2010)Liu, Latecki, and Yan]{liu2010robust2}
H.~Liu, L.~J. Latecki, and S.~Yan.
\newblock Robust clustering as ensembles of affinity relations.
\newblock In \emph{Advances in neural information processing systems}, pages
  1414--1422, 2010.

\bibitem[Liu et~al.(2005)Liu, Bollen, Nelson, and Van~de Sompel]{liu2005co}
X.~Liu, J.~Bollen, M.~L. Nelson, and H.~Van~de Sompel.
\newblock Co-authorship networks in the digital library research community.
\newblock \emph{Information processing \& management}, 41\penalty0
  (6):\penalty0 1462--1480, 2005.

\bibitem[Luo et~al.(2015)Luo, Qi, and Ye]{luo2015linear}
Z.~Luo, L.~Qi, and Y.~Ye.
\newblock Linear operators and positive semidefiniteness of symmetric tensor
  spaces.
\newblock \emph{Science China Mathematics}, 58\penalty0 (1):\penalty0 197--212,
  2015.

\bibitem[Milo et~al.(2002)Milo, Shen-Orr, Itzkovitz, Kashtan, Chklovskii, and
  Alon]{milo2002network}
R.~Milo, S.~Shen-Orr, S.~Itzkovitz, N.~Kashtan, D.~Chklovskii, and U.~Alon.
\newblock Network motifs: simple building blocks of complex networks.
\newblock \emph{Science}, 298\penalty0 (5594):\penalty0 824--827, 2002.

\bibitem[Munkres(2014)]{munkres2014topology}
J.~Munkres.
\newblock \emph{Topology}.
\newblock Pearson Education, 2014.

\bibitem[Nugent and Meila(2010)]{nugent2010overview}
R.~Nugent and M.~Meila.
\newblock An overview of clustering applied to molecular biology.
\newblock In \emph{Statistical methods in molecular biology}, pages 369--404.
  Springer, 2010.

\bibitem[Papa and Markov(2007)]{papa2007hypergraph}
D.~A. Papa and I.~L. Markov.
\newblock Hypergraph partitioning and clustering., 2007.

\bibitem[Qi et~al.(2017)Qi, Zhang, Braun, Bohnet-Waldraff, and
  Giraud]{qi2017regularly}
L.~Qi, G.~Zhang, D.~Braun, F.~Bohnet-Waldraff, and O.~Giraud.
\newblock Regularly decomposable tensors and classical spin states.
\newblock \emph{Communications in Mathematical Sciences}, 15\penalty0
  (6):\penalty0 1651--1665, 2017.

\bibitem[Rosvall et~al.(2014)Rosvall, Esquivel, Lancichinetti, West, and
  Lambiotte]{rosvall2014memory}
M.~Rosvall, A.~V. Esquivel, A.~Lancichinetti, J.~D. West, and R.~Lambiotte.
\newblock Memory in network flows and its effects on spreading dynamics and
  community detection.
\newblock \emph{Nature communications}, 5:\penalty0 4630, 2014.

\bibitem[Vervliet et~al.(2016)Vervliet, Debals, Sorber, Van~Barel, and
  De~Lathauwer]{tensorlab3.0}
N.~Vervliet, O.~Debals, L.~Sorber, M.~Van~Barel, and L.~De~Lathauwer.
\newblock Tensorlab 3.0, Mar. 2016.
\newblock URL \url{https://www.tensorlab.net}.
\newblock Available online.

\bibitem[Xu and Tian(2015)]{xu2015comprehensive}
D.~Xu and Y.~Tian.
\newblock A comprehensive survey of clustering algorithms.
\newblock \emph{Annals of Data Science}, 2\penalty0 (2):\penalty0 165--193,
  2015.

\bibitem[Yavero{\u{g}}lu et~al.(2014)Yavero{\u{g}}lu, Malod-Dognin, Davis,
  Levnajic, Janjic, Karapandza, Stojmirovic, and
  Pr{\v{z}}ulj]{yaverouglu2014revealing}
{\"O}.~N. Yavero{\u{g}}lu, N.~Malod-Dognin, D.~Davis, Z.~Levnajic, V.~Janjic,
  R.~Karapandza, A.~Stojmirovic, and N.~Pr{\v{z}}ulj.
\newblock Revealing the hidden language of complex networks.
\newblock \emph{Scientific reports}, 4:\penalty0 4547, 2014.

\bibitem[Yuan and You(2014)]{yuan2014some}
P.~Yuan and L.~You.
\newblock Some remarks on p, p0, b and b0 tensors.
\newblock \emph{Linear Algebra and its Applications}, 459:\penalty0 511--521,
  2014.

\bibitem[Zhang(2007)]{zhang2007random}
B.-T. Zhang.
\newblock Random hypergraph models of learning and memory in biomolecular
  networks: shorter-term adaptability vs. longer-term persistency.
\newblock In \emph{2007 IEEE Symposium on Foundations of Computational
  Intelligence}, pages 344--349. IEEE, 2007.

\bibitem[Zhou et~al.(2007)Zhou, Huang, and Sch{\"o}lkopf]{zhou2007learning}
D.~Zhou, J.~Huang, and B.~Sch{\"o}lkopf.
\newblock Learning with hypergraphs: Clustering, classification, and embedding.
\newblock In \emph{Advances in neural information processing systems}, pages
  1601--1608, 2007.

\end{thebibliography}
\bibliographystyle{abbrvnat} 


\newpage
\appendix

\section{Proofs of Tensor Lemmas}
\label{section:coneproofs}

\begin{proof}[Proof of Lemma \ref{lemma:tensor_innerprod}]
By definition of inner products, we obtain
\begin{align*}
\inprod[X]{Y}
&= \sum_{i_1,\ldots,i_m = 1}^{n} X_{i_1 ,\ldots, i_m} Y_{i_1 ,\ldots, i_m} 
= \sum_{i_1,\ldots,i_m = 1}^{n} x^{(1)}_{i_1} \ldots x^{(m)}_{i_m} y^{(1)}_{i_1} \ldots y^{(m)}_{i_m}\\
&= \sum_{i_1,\ldots,i_m = 1}^{n} \prod_{j=1}^{m} x^{(j)}_{i_j} y^{(j)}_{i_j}
= \left(\sum_{i_1 = 1}^n x^{(1)}_{i_1} y^{(1)}_{i_1}\right) \ldots \left(\sum_{i_m = 1}^n x^{(m)}_{i_m} y^{(m)}_{i_m}\right) 
= \prod_{i=1}^m x^{(i)\top} y^{(i)} .
\end{align*}
This completes our proof.
\end{proof}

\begin{proof}[Proof of Lemma \ref{lemma:normineq}]
We use Cauchy-Schwarz inequality in our proof. Note that for any $A \in \Sm$,
\begin{align*}
\lmax(A)^2 
= &\sup_{u \in \R^n, \norm{u}= 1} \inprod[A]{u^\om}^2 
\leq \sup_{u \in \R^n, \norm{u}= 1} \inprod[A]{A} \cdot \inprod[u^\om]{u^\om} \\
= &\norm{A}^2  \sup_{u \in \R^n, \norm{u}= 1} (u^\top u)^m 
= \norm{A}^2 .
\end{align*}
Since norms are nonnegative, we have $\lmax(A) \leq \normf{A}$.
\end{proof}

\begin{proof}[Proof of Lemma \ref{lemma:convex_cone}]
For any $A,B \in \Smp$ and $u \in \mathbb{R}^n$, we have 
\[
\inprod[\theta_1 A + \theta_2 B]{u^{\otimes m}} = \theta_1 \inprod[A]{u^{\otimes m}} + \theta_2 \inprod[B]{u^{\otimes m}} 
\geq 
0,
\]
if $\theta_1, \theta_2 \geq 0$.
\end{proof}

\begin{proof}[Proof of Lemma \ref{lemma:Vconvexcone}]
First note that $\Vm$ is a cone. That is, for every $A \in \Vm$ and $\theta \geq 0$, we have $\theta A \in \Vm$.
Next we show $\Vm$ is convex by considering the convex hull of the set of rank-one tensors $\Um$. Note that for any tensor $A \in \conv(\Um)$, $\dim{\Sm} = M - 1$ is the maximum number of possibly different entries in $A$ due to symmetry.
Applying Carathéodory's theorem to $\Um$ leads to $\Vm = \left\{\sum_{i=1}^M A^{(i)} \mid A^{(i)} \in \Um, i \in [M] \right\} = \conv(\Um)$. This completes our proof.
\end{proof}

\begin{proof}[Proof of Lemma \ref{lemma:rank1}]
To prove the first part, by Lemma \ref{lemma:tensor_innerprod}, for any $u,v \in \mathbb{R}^n$, we have
$
\inprod[u^{\otimes m}]{v^{\otimes m}} 
= 
(u^\top v)^m 
\geq 
0.
$
Thus, by definition of $\Smp$, $u^{\otimes m} \in \Smp$. To prove the second part, by definition of dual cones, we have 
$
(\Um)^\ast 
= \{A \in \Sm \mid \inprod[A]{B} \geq 0, \forall B \in \Um\} 
= \Smp.
$
This completes our proof.
\end{proof}

\begin{proof}[Proof of Lemma \ref{lemma:dualconeV}]
We use $\cl(\cdot)$ to denote the \emph{closure} of a set, and $\conv(\cdot)$ to denote the \emph{convex hull} of a set. 

First we prove that $\Vm$ is a subset of $\Smp$. Note that for any $A \in \Vm$, one can write $A$ as the summation of at most $M$ tensors from $\Um$. In other words, $A = \sum_{i=1}^M A^{(i)}$, where each $A^{(i)} \in \Um$. Since $A^{(i)} \in \Um \subset \Smp$, we have $A = \sum_{i=1}^M A^{(i)} \in \Smp$. Thus $\Vm \subset \Smp$.

Next we prove that $\Vm$ is closed, by showing that if a set contains all limit points, then the set is closed \citep{munkres2014topology}. Without loss of generality assume $A$ is a limit point of $\Vm$, such that $\normf{A} < \infty$. By definition of limit points, $A$ can be approximated by points in $\Vm$. Mathematically this means that one can find an infinite sequence
$\{A_1, A_2, \ldots\}_{j=1}^\infty \subset \Vm$, 
such that $\lim_{j\to \infty} A_j = A$. Since every $A_j$ is in set $\Vm$, by definition there exists a collection of $n$-dimensional vectors $\{x^{(ij)}\}_{i=1}^M \subset \R^n$, such that $A_j = \sum_{i=1}^M (x^{(ij)})^\om$. 
We then consider the tensor Frobenius norm of $A_j$'s. Note that the infinite sequence of tensor Frobenius norm $\{\normf{A_1}, \normf{A_2}, \ldots \}_{j=1}^\infty$ is bounded above with respect to $A$, since $\normf{A}$ is bounded above. Now expanding the $j$-th term in the tensor Frobenius norm sequence, we obtain 
\[
    \sum_{i=1}^M \normf{x^{(ij)}}^{2m} = \normf{A_j}^2 - \sum_{i\neq k} \left( (x^{(ij)})^\top (x^{(kj)}) \right)^m.
\]
Since $m$ is even, every term of the summation on the right-hand side is nonnegative. From the fact that $\{\normf{A_j}\}$ is bounded above and summation is nonnegative, one can tell that the tensor Frobenius norm $\normf{x^{(ij)}}$ is bounded above for every $i$ and $j$. Without loss of generality, we assume there exists $x^{(i)} = \lim_{j\to \infty} x^{(ij)}$ for every $i\in [M]$. It follows that 
\[
A = \lim_{j\to\infty} A_j 
= \lim_{j\to\infty} \sum_{i=1}^M (x^{(ij)})^\om
= \sum_{i=1}^M (x^{(i)})^{\otimes m} .
\]
Then by definition, $A\in \Vm$. Since every limit point of $\Vm$ is contained by itself, topology tells us $\Vm$ is closed, and $\cl(\Vm) = \Vm$.

Note that for any tensor $A \in \conv(\Um)$, $\dim{\Sm}$ is the maximum number of possibly different entries in $A$ due to symmetry. We define a bijective mapping $\vect(\cdot): \Sm \to \R^{\dim{\Sm}}$, which takes a tensor and unfolds it to a vector. Since $M = \dim{\Sm} + 1$, applying Carathéodory's theorem to $\vect(\cdot)$ with basis from $\vect(\Um)$, we have $\Vm = \conv(\Um)$. 
Since $\Vm$ is closed and $\Vm \subset \Smp$, we have $\Vm = \cl(\conv(\Um))$. Also by Lemma \ref{lemma:rank1}, since $\Smp = (\Um)^\ast$, we have $(\Smp)^\ast = (\Um)^{\ast\ast} = \cl(\conv(\Um))$ since for any set $C$, $(C)^{\ast\ast} = \cl(\conv(C))$. Thus we have $(\Smp)^\ast = \Vm$.
\end{proof}

\begin{proof}[Proof of Lemma \ref{lemma:doubledual}]
Note that for any cone $\mathcal{K}$, we have $(\mathcal{K}^\ast)^\ast = \cl(\conv(\mathcal{K}))$. Since $\Smp$ is closed and convex, we have $((\Smp)^\ast)^\ast = \cl(\conv(\Smp)) = \Smp$. By Lemma \ref{lemma:dualconeV} we obtain $(\Vm)^\ast = \Smp$.
\end{proof}

\section{Proof of Main Theorem}
\label{appendix:proof_mainthm}

\begin{proof}[Proof of Lemma \ref{lemma:opt_optimality}]
To prove the lemma, we verify the true primal variable $Y^\ast$ and the contructed dual variables $(V^\ast, A^\ast, \eta^\ast)$ satisfy all KKT conditions. 

Regarding the primal variable, note that $Y^\ast = \yast^\om$, and $\yast$ contains equal number of $+1$'s and $-1$'s. As a result, we have $Y_{\sgmm}^\ast = 1$, and by Lemma \ref{lemma:tensor_innerprod}, $\inprod[Y^\ast]{\onevct^\om} = \inprod[\yast^\om]{\onevct^\om} = (\yast^\top \onevct)^m = 0$. Also note that $Y^\ast$ is a rank-one tensor, which by definition is in the $\Vm$ cone. Thus we have shown the groundtruth $Y^\ast = \yast^\om$ fulfills \eqref{kkt:pf}.

Now we show the constructed dual variables satisfy the other three KKT conditions. First the \eqref{kkt:stationarity} condition is trivially satisfied by our construction
$A^\ast = V^\ast - W + \eta^\ast \onevct^{\om}$.
Next regarding the \eqref{kkt:cs} condition, note that 
\[
\inprod[A^\ast]{Y^\ast}
=
\inprod[V^\ast - W + \eta^\ast \onevct^{\om}]{Y^\ast}    
= 
\inprod[V^\ast]{Y^\ast} - \inprod[W]{Y^\ast} \,.
\]
By construction, $V^\ast$ is a diagonal tensor with $V^\ast_{i,\dots,i} = \sum_{i_2,\ldots, i_m} W_{i,i_2,\ldots, i_m} y_i^\ast y_{i_2}^\ast \ldots y_{i_m}^\ast$ for every $i\in [n]$. This leads to 
\[
\inprod[V^\ast]{Y^\ast} - \inprod[W]{Y^\ast}
=
\sum_i \sum_{i_2,\ldots, i_m} W_{i,i_2,\ldots, i_m} y_i^\ast y_{i_2}^\ast \ldots y_{i_m}^\ast
-
\sum_{i_1,\ldots, i_m} W_{i_1,\ldots, i_m} y_{i_1}^\ast \ldots y_{i_m}^\ast
=
0 \,,
\]
satisfying the \eqref{kkt:cs} condition.

Finally we discuss the \eqref{kkt:df} condition, which requires
$A^\ast = V^\ast - W + \eta^\ast \onevct^{\om}  \succeq_{\Smp} 0$. Recall from Section \ref{section:prelim}, that $A^\ast \in \Smp$, if $\inprod[A^\ast]{u^\om} \geq 0$ for all vector $u$. This is equivalent to requiring
$\inf_{\norm{u} = 1} \inprod[A^\ast]{u^\om} \geq 0$.
We discuss two cases of $u$.
In the first case, for every unit vector $u$ that is not orthogonal to $\onevct$, we have $\inprod[A^\ast]{u^\om} = \inprod[\Vast-W+\eta^\ast \onevct^\om]{u^\om} = \inprod[\Vast-W]{u^\om} + \eta (\onevct^\top u)^m$. 
Regarding the first term, note that each entry of $W$ is bounded by $B$  as in Definition \ref{def:modelclass}, and each entry of $\Vast$ is bounded by $Bn^{m-1}$. By Lemma \ref{lemma:normineq} we obtain
$
\inprod[\Vast-W]{u^\om}
\geq
-\normf{\Vast} - \normf{W}
\geq 
- B n^{m-1/2} - B n^{m/2}
$,
which is a finite quantity bounded below.
Since $m$ is even and $\eta$ is tending to infinity, the summation $\inprod[\Vast-W]{u^\om} + \eta (\onevct^\top u)^m$ is always greater than $0$, or equivalently, $\inf_{\norm{u} = 1, u \not\perp \onevct} \inprod[A^\ast]{u^\om} \geq 0$. 
In the second case, for every unit vector $u \perp \onevct$, we have $\inprod[A^\ast]{u^\om} = \inprod[\Vast-W+\eta^\ast \onevct^\om]{u^\om} = \inprod[\Vast-W]{u^\om}$, which can potentially be negative.
Combining the two cases above, as long as $\inf_{\norm{u} = 1, u\perp \onevct} \inprod[V^\ast - W]{u^\om} \geq 0$, we obtain $\inf_{\norm{u} = 1} \inprod[V^\ast - W + \eta^\ast \onevct^\om]{u^\om} \geq 0$ and equivalently, $A^\ast = V^\ast - W + \eta^\ast \onevct^{\om}  \succeq_{\Smp} 0$. This fulfills the \eqref{kkt:df} condition and completes our proof.
\end{proof}

\begin{proof}[Proof of Lemma \ref{lemma:opt_uniqueness}]
First we want to make sure $Y^\ast = \yyast$ is an optimal solution. By Lemma \ref{lemma:opt_optimality}, it is sufficient to ensure $\inf_{\norm{u} = 1, u\perp \onevct} \inprod[V^\ast - W]{u^\om} \geq 0$. In particular, if $u$ is a multiple of $\yast$, we have $\inprod[V^\ast - W]{\yyast} = 0$, which is exactly the \eqref{kkt:cs} condition.
Now to ensure uniqueness of the solution, we want the equality above to hold only for multiples of $Y^\ast = \yyast$. 
This leads to our condition 
$\inf_{\norm{u} = 1, u \perp \onevct, u\nshortparallel \yast} \inprod[V^\ast - W]{u^\om} 
=
\lambda_{\onevct} ({V^\ast - W})> 0$.
Furthermore, given the constraint $Y_{\sgmm} = 1$ as in \eqref{kkt:pf}, all other multiples of $Y^\ast$ are eliminated from the space of feasible solutions, and the only feasible solution that fulfills all KKT conditions is $Y^\ast$ itself. This completes our proof.
\end{proof}

\begin{proof}[Proof of Theorem \ref{thm:main}]
Lemma \ref{lemma:opt_uniqueness} tells that as long as \eqref{hopt:l2} holds, the conic form problem \eqref{hopt:primal} returns the correct labeling. Our goal is to prove \eqref{hopt:l2} holds with high probability. Note that \eqref{hopt:l2} is a function of random variable $W$. By definition of $\lambda_{\onevct}$, we obtain the following decomposition
\begin{align}
\lambda_{\onevct} (\Vast-W) 
=&\inf_{u \perp \onevct, u\nshortparallel \yast, \norm{u} = 1} 
\inprod[\Vast - W]{u^\om} \nonumber \\
\geq &\inf_{u \perp \onevct, u\nshortparallel \yast, \norm{u} = 1}  
\langle \Vast-\Expect{\Vast}, \vct{u}^{\otimes m}\rangle \label{hsdp:DgivenX_utu} \\
&+\inf_{u \perp \onevct, u\nshortparallel \yast, \norm{u} = 1}  \langle -W+\Expect{W}, \vct{u}^{\otimes m}\rangle \label{hsdp:AgivenX_utu} \\ 
&+\inf_{u \perp \onevct, u\nshortparallel \yast, \norm{u} = 1}  \langle \Expect{\Vast - W}, \vct{u}^{\otimes m}\rangle \label{hsdp:EXgivenX_utu}\,,
\end{align}
and it is sufficient to prove the summation of \eqref{hsdp:DgivenX_utu}, \eqref{hsdp:AgivenX_utu} and \eqref{hsdp:EXgivenX_utu} is greater than $0$.
By definition of tensor eigenvalues, for \eqref{hsdp:DgivenX_utu} we obtain the following lower bound
\begin{align}
\inf_{u \perp \onevct, u\nshortparallel \yast, \norm{u} = 1} 
\langle \Vast-\Expect{\Vast}, \vct{u}^{\otimes m}\rangle 
\geq &\lmin(\Vast-\Expect{\Vast}) \nonumber \\
= &\min_i (V_{i,\dots,i}^\ast - \Expect{V_{i,\dots,i}^\ast}) \,.
\label{hsdp:DgivenX_concentration}
\end{align}
Similarly for \eqref{hsdp:AgivenX_utu}, we have
\begin{align}
\inf_{u \perp \onevct, u\nshortparallel \yast, \norm{u} = 1}
\langle -W+\Expect{W}, \vct{u}^{\otimes m}\rangle 
\geq &\lmin(-W+\Expect{W}) \nonumber\\
= &-\lmax{(W-\Expect{W})} \,.
\label{hsdp:AgivenX_concentration}
\end{align}

Regarding the expectation in \eqref{hsdp:EXgivenX_utu}, we first characterize the expectation of $W$. Consider the definition of \ref{MP1}:
\[
\Expect{W} = \sum_{k=0}^m p_k \sum_{\substack{z\in \{0,1\}^m \\ \onevct^\top z = k}} \bigotimes_{i=1}^{m} \left(z_i \onevct + (1-z_i) y^\ast\right)\,.
\]
Instead of the whole tensor we now consider every single entry $W_{i_1 ,\ldots, i_m}$. By carefully expanding every single entry using combinatorics we obtain
\begin{align*}
\Expect{W_{i_1 ,\ldots, i_m}} \prod_{j=1}^m y_{i_j}^\ast 
=& \sum_{k=0}^m p_k \sum_{\substack{z\in \{0,1\}^m \\ \onevct^\top z = k}} \prod_{j=1}^{m} \left(z_j  + (1-z_j) y_{i_j}^\ast\right) \prod_{j=1}^m y_{i_j}^\ast \\
=& \sum_{k=0}^m p_k \sum_{\substack{z\in \{0,1\}^m \\ \onevct^\top z = k}} \prod_{j=1}^{m} y_{i_j}^{\ast z_j} 1^{1-z_j}  
= \sum_{k=0}^m p_k \sum_{\substack{z\in \{0,1\}^m \\ \onevct^\top z = k}} \prod_{\substack{{j=1} \\ z_j = 1}}^m y_{i_j}^\ast  \\
=& \sum_{k=0}^m p_k 
\sum_{s = \max(0,k-l)}^{\min(k,m-l)}
(-1)^s \binom{l}{k-s}\binom{m-l}{s} \nonumber \\
= &\sum_{k=0}^m p_k f(m,l,k) \,,
\end{align*}
where $l = \sum_{j=1}^m \onevct[y_{i_j}^\ast = 1]$ is the number of positive labels, bounded between $0$ and $m$. The second equality above holds by the fact that we pick every combination of $k$ terms out of $y_{i_1}^\ast$ through $y_{i_m}^\ast$, calculate the product of these $k$ terms (either $+1$ or $-1$), and sum over all possible combinations. Thus by Lemma \ref{lemma:tensor_innerprod}, we obtain
\begin{align}
\sup_{u \perp \onevct, u\nshortparallel \yast, \norm{u} = 1} 
\inprod[\Expect{W}]{u^\om} 
&= \sup_{u \perp \onevct, u\nshortparallel \yast, \norm{u} = 1} 
\inprod[\sum_{k=0}^m p_k \sum_{\substack{b\in \{0,1\}^m \\ \onevct^\top b = k}} \bigotimes_{i=1}^{m} \left(b_i \onevct + (1-b_i) y^\ast\right)]{u^\om} \nonumber\\
&= \sup_{u \perp \onevct, u\nshortparallel \yast, \norm{u} = 1} 
\sum_{k=0}^m p_k \sum_{\substack{b\in \{0,1\}^m \\ \onevct^\top b = k}} \inprod[\bigotimes_{i=1}^{m} \left(b_i \onevct + (1-b_i) y^\ast\right)]{u^\om} \nonumber\\
&= \sup_{u \perp \onevct, u\nshortparallel \yast, \norm{u} = 1} 
\sum_{k=0}^m p_k \sum_{\substack{b\in \{0,1\}^m \\ \onevct^\top b = k}} (u^\top \onevct)^{k} (u^\top \yast)^{m-k} \nonumber\\
&\overset{(a)}{=} p_{0} (u^\top \yast)^{m} \nonumber\\
&\leq p_0 (\normsq{u}\normsq{\yast})^{m/2} \nonumber\\
&= p_0 n^{m/2} \label{hopt:w_bound} \,,
\end{align}
where (a) holds because if $k\neq 0$, there exists at least one term of $u^\top \onevct$, which is $0$ because of orthogonality.

Next we characterize $\Expect{V_{i,\dots,i}^\ast}$. Naturally we consider two cases. If $y_i^\ast$ is positive, the number of positive labels $l$ in $y_i^\ast, y_{i_2}^\ast,\ldots,y_{i_m}^\ast$ is at least $1$, and by definition of $V_{i,\dots,i}^\ast$,
\begin{align*}
\Expect{V_{i,\dots,i}^\ast} 
= &\Expect{\sum_{i_2,\ldots, i_m} W_{i,i_2,\ldots, i_m} y_i^\ast y_{i_2}^\ast ,\ldots, y_{i_m}^\ast} \nonumber \\ 
=& \left(\frac{n}{2}\right)^{m-1} \sum_{l=1}^{m} \binom{m-1}{l-1} 
\sum_{k=0}^m p_k 
f(m,l,k) \,.  
\end{align*}
On the other hand if $y_i^\ast$ is negative, the number of positive labels $l$ in $y_i^\ast, y_{i_2}^\ast,\ldots,y_{i_m}^\ast$ is at most $m-1$. Similarly we obtain
\begin{align*}
\Expect{V_{i,\dots,i}^\ast} 
= &\Expect{\sum_{i_2,\ldots, i_m} W_{i,i_2,\ldots, i_m} y_i^\ast y_{i_2}^\ast \ldots y_{i_m}^\ast} \nonumber\\
=& \left(\frac{n}{2}\right)^{m-1} \sum_{l=0}^{m-1} \binom{m-1}{l} 
\sum_{k=0}^m p_k
f(m,l,k) \,.  
\end{align*}
We want to highlight that, in either case, we have the lower bound
\begin{equation}
\Expect{V_{i,\dots,i}^\ast} \geq \left(\frac{n}{2}\right)^{m-1} F(m, p) \,,
\label{hopt:d_bound}
\end{equation}
for every $i \in [n]$.
Then, combining \eqref{hopt:w_bound} and \eqref{hopt:d_bound}, we obtain the following lower bound for \eqref{hsdp:EXgivenX_utu}
\begin{align}
\inf_{u \perp \onevct, u\nshortparallel \yast, \norm{u} = 1} 
\inprod[\Expect{\Vast - W}]{u^\om} 
\geq &\inf_{u \perp \onevct, u\nshortparallel \yast, \norm{u} = 1} 
\sum_i \Expect{V_{i,\dots,i}^\ast} u_i^m 
- p_0 n^{m/2} \nonumber\\
\geq &\inf_{u \perp \onevct, u\nshortparallel \yast, \norm{u} = 1} 
\left(\frac{n}{2}\right)^{m-1} F(m, p) \sum_i u_i^m - p_0 n^{m/2} \nonumber\\
\geq &\left(\frac{n}{2}\right)^{m-1} F(m, p) n^{1-m/2} - p_0 n^{m/2}  \nonumber\\
\geq & n^{m/2} (2^{1-m} F(m,p) - p_0) \,, \label{hopt:sum_expectation}  
\end{align}
where the second to last inequality holds because $\sum_i u_i^m$ takes the minimum value when $\abs{u_i} = \tfrac{1}{\sqrt{n}}$.

After deriving lower bounds for each of the three terms, we only need to show $\eqref{hsdp:DgivenX_concentration} + \eqref{hsdp:AgivenX_concentration} + \eqref{hopt:sum_expectation}$ is greater than $0$ with high probability. To do so we can simply divide \eqref{hopt:sum_expectation} into two equal parts of size $\frac{1}{2} n^{m/2} (2^{1-m} F(m,p) - p_0)$, and show that 
$
\min_i (V_{i,\dots,i}^\ast - \Expect{V_{i,\dots,i}^\ast}) + \frac{1}{2} n^{m/2} (2^{1-m} F(m,p) - p_0) > 0
$
and 
$
-\lmax{(W-\Expect{W})} + \frac{1}{2} n^{m/2} (2^{1-m} F(m,p) - p_0) > 0 \,.
$

To show \eqref{hsdp:DgivenX_concentration} is bounded below with high probability, we use Hoeffding's inequality in our proof. Note that 
$
\Prob{V_{i,\dots,i}^\ast - \Expect{V_{i,\dots,i}^\ast}\leq -t} 
\leq \Expect{\exp\left(-\frac{2t^2}{(2B)^2 n^{m-1}}\right)} 
= \exp\left(-\frac{t^2}{2B^2 n^{m-1}}\right)\,.
$
By a union bound, we obtain
$
\Prob{\min_i (V_{i,\dots,i}^\ast - \Expect{V_{i,\dots,i}^\ast})\leq -t}
    \leq n \exp\left(-\frac{t^2}{2B^2 n^{m-1}}\right)\,.
$
Setting $t = \frac{1}{2} n^{m/2} (2^{1-m} F(m,p) - p_0)$ leads to
\begin{align}
\Prob{\min_i (V_{i,\dots,i}^\ast - \Expect{V_{i,\dots,i}^\ast}) \leq -\frac{1}{2} n^{m/2} (2^{1-m} F(m,p) - p_0)}  
\leq &n \exp\left(-\frac{n(2^{1-m} F(m,p) - p_0)^2}{8 B^2}\right) \nonumber\\ 
\leq &c_0 n^{-1}\,,
\label{high_prob_1}
\end{align}
and the last inequality holds given that $\frac{(2^{1-m} F(m,p) - p_0)^2}{B^2} 
\geq 16 \frac{\log n}{n} - 8\frac{\log c_0}{n}$.

To show \eqref{hsdp:AgivenX_concentration} is bounded below with high probability, we use the result of Lemma \ref{lemma:normineq}. Note that
$
\Prob{-\lmax(W-\Expect{W})\leq -t} 
\leq  \Prob{-\normf{W-\Expect{W}}^2 \leq -t^2}  
\leq  t^{-2}  \Expect{\normf{W-\Expect{W}}^2}  
\leq  \frac{\sigma^2}{t^2} 
\,.
$
Setting $t = \frac{1}{2} n^{m/2} (2^{1-m} F(m,p) - p_0)$ leads to
\begin{align}
\Prob{-\lmax(W-\Expect{W})\leq -\frac{1}{2} n^{m/2} (2^{1-m} F(m,p) - p_0)} 
\leq &\frac{4 \sigma^2}{n^m (2^{1-m} F(m,p) - p_0)^2} \nonumber\\
\leq &c_1 n^{-1}\,,
\label{high_prob_2}
\end{align}
and the last inequality holds given that 
$\frac{(2^{1-m} F(m,p) - p_0)^2}{\sigma^2} 
\geq \frac{4}{c_1} n^{1-m}$.

Combining the results \eqref{high_prob_1} and \eqref{high_prob_2} above, we can see that $\eqref{hsdp:DgivenX_concentration} + \eqref{hsdp:AgivenX_concentration} + \eqref{hopt:sum_expectation}$ is greater than $0$ with probability at least $1 - (c_0+c_1)n^{-1}$, which means the probability of $\lambda_{\onevct} (\Vast - W) > 0$ is at least $1 - (c_0+c_1)n^{-1}$. 
This completes our proof. 
\end{proof}
 
\section{NP-hardness of Positive Semidefiniteness of a Symmetric Tensor is Open}
\label{appendix:nphardness}
We want to highlight that since program \eqref{hopt:primal} is convex, its correctness can be checked numerically by using a projected gradient descent solver, after relaxing the problem to the positive semidefinite cone. We implement a projected gradient descent solver in Algorithm \ref{alg:new_randomized}, which uses an optimization algorithm to detect whether a given symmetric tensor is positive semidefinite or not \citep{han2013unconstrained} as a subroutine. (Details about our algorithm in the next section.)

We believe the NP-hardness of checking whether a symmetric tensor is in the \Vcone is an open problem. 
Furthermore, we believe the problem is open even if we relax it to the positive semidefinite tensor cone, for any symmetric tensor $S$ subject to the constraint $S_{\sigma_2^{n,m}} = 1$ as in program \eqref{hopt:primal}.
It is claimed in \citet{hillar2013most}, Theorem 11.2, that deciding whether a symmetric $4$-th order is nonnegative definite is NP-hard. However, by looking at their proof right before Theorem 11.2, their definition of tensor symmetry is very different from our definition. In \citet{hillar2013most}, the authors construct a $4$-th order $n$-dimensional tensor $S$ from a symmetric matrix $A = (a_{ij})$, such that $S_{i,j,i,j} = S_{j,i,j,i} = a_{ij}$ for all $i,j\in [n]$, and all other entries are assigned to be $0$. In this way, checking whether tensor $S$ is positive semidefinite (whether $\sum_{ij} a_{ij} x_i^2 x_j^2 \geq 0$ for all $x \in \R^n$) is equivalent to checking whether matrix $A$ is copositive (whether $\sum_{ij} a_{ij} z_i z_j \geq 0$ for all $z \in [0,\infty)^n$), which is known to be a NP-hard problem \citep{burer2012copositive}. As a result, the author claims that checking positive semidefiniteness of tensor fulfilling such symmetry conditions is NP-hard.
However in our case, we have $S_{i,i,j,j} = S_{i,j,j,i} = S_{i,j,i,j} = S_{j,j,i,i} = S_{j,i,i,j} = S_{j,i,j,i} = a_{ij} = 1$, according the constraint $S_{\sigma_2^{n,m}} = 1$ in program \eqref{hopt:primal}. As a result, certifying copositivity is easy because $\sum_{ij} a_{ij} x_i^2 x_j^2 = \sum_{ij} x_i^2 x_j^2 \geq 0$ is trivially true, which means that the NP-hardness reduction above does not work for our constraint $S_{\sigma_2^{n,m}} = 1$.

\section{Projected Gradient Descent Solver}
\label{appendix:heuristic}

In this section, we test the proposed convex optimization formulation \eqref{hopt:primal} by implementing a projected gradient descent solver in MATLAB. 
To do so, we relax the \Vcone to the positive semidefinite tensor cone.
Projected gradient descent is a standard method to solve constrained convex optimization problems.
Our projected gradient descent solver in Algorithm \ref{alg:new_randomized} works as follows: starting from an initial point $Y^{(0)}$, the  algorithm repeats the following assignment until a stopping condition is met:
\[
Y^{(k+1)} \leftarrow P(Y^{(k)} + \zeta W) \,,
\]
where $\zeta$ is the step size of each iteration, and $P(\cdot)$ is a projection operator. 

The projection operator tries to find the ``closest'' point to $Y^{(k)} + \zeta W$, that fulfills all the constraints in the convex problem \eqref{hopt:primal}. 
To fulfill the first two constraints 
$Y_{i,\ldots, i} = 1$
and
$\langle Y, \onevct^{\otimes m} \rangle = 0$,
the algorithm subtracts the mean of all entries in $Y^{(k)}$ from each entry, and sets the diagonal entries to be $1$.
To fulfill the positive semidefinite constraint
$Y \succeq_{\Smp} 0$,
Algorithm \ref{alg:new_randomized} invokes the helper function in Algorithm \ref{alg:new_neg_vector}, using the method in Equation (3.1) in \citet{han2013unconstrained}, which tries to find a unit vector $v \in \R^n$ such that $\inprod[Y^{(k)}]{v^\om}$ is minimized. Algorithm \ref{alg:new_randomized} then subtracts the tensor $\inprod[Y^{(k)}]{v^\om} \cdot v^\om$ from $Y^{(k)}$, making the projected tensor positive semidefinite in the direction of $v$.
We assume the point $P(Y^{(k)} + \zeta W)$ satisfies the constraints, after repeating the projection procedure for a number of iterations.
Theorem 5 in \citet{han2013unconstrained} guarantees that the output vector $v$ is either the zero vector $\zerovct$, or $v$ is a critical point of the function $\inprod[Y^{(k)}]{v^\om}$. If Algorithm \ref{alg:new_neg_vector} returns the zero vector, it is likely that the input $Y^{(k)}$ is already positive semidefinite.

Given an input tensor $A$ (in our case $A = Y$), Algorithm \ref{alg:new_neg_vector} searches for the target vector $x$ (in our case $x = v$) by solving an unconstrained optimization problem as follows 
\[
\min f_1(x) = \frac{1}{2m} \inprod[I]{x^\om}^2 + \frac{1}{m} \inprod[A]{x^\om} \,,
\]
where $I$ is the identity tensor, such that $I_{i,\dots,i} = 1$ and all other entries are $0$ (see \citet[eq. (3.1)]{han2013unconstrained}.
Since both $I$ and $A$ are symmetric, the gradient of the objective function $f_1(x)$ is
\[
\nabla f_1(x) = \inprod[I]{x^\om} Ix^{\otimes m-1} + Ax^{\otimes m-1} \,,
\]
where $Ix^{\otimes m-1}$, $Ax^{\otimes m-1}$ are $n$-dimensional vectors, such that $(Ix^{\otimes m-1})_i = x_i^{m-1}$, and  $(Ax^{\otimes m-1})_i = \sum_{i_2, \dots, i_m} A_{i,i_2,\dots,i_m} x_{i_2} \dots x_{i_m}$.
Algorithm \ref{alg:new_neg_vector} repeats the gradient descent step 
$x \leftarrow x - \gamma \nabla f_1(x)$ and checks the objective value $f_1(x)$. If $f_1(x)$ is less than $0$, the algorithm finds a negative eigenvalue and returns the corresponding eigenvector $x$.

The settings of the experiment are as follows. 
We test Algorithm \ref{alg:new_randomized} using Example Model \ref{eg:count} with $T=1$, i.e., a random $m$-uniform hypergraph model. Note that in this case, tensor $W$ can be interpreted as the adjacency tensor of the $m$-uniform hypergraph.
We pick $n = 20$ to be the number of nodes, and the hypergraph has an order of $m = 4$. 

To run the experiments, we first randomly generate the groundtruth vector $y^{\ast}$ and tensor $Y^\ast$. Next we randomly generate the adjacency tensor $W$ from $Y^\ast$ following the procedure described in Example Model \ref{eg:count}. We then solve for tensor $Y$ using Algorithm \ref{alg:new_randomized}. Our implementation requires the Tensorlab package \citep{tensorlab3.0}. In our experiments we run $100$ outer iterations, $40$ inner iterations, and $20$ descent iterations for each trial. We set the step size $\zeta$ and the gradient descent factor $\gamma$ to be $0.05$. 

\begin{algorithm}
\caption{Projected Gradient Descent Solver}
\label{alg:new_randomized}
\textbf{Input:} Observed affinity tensor $W$, step size $\zeta$  \\
\textbf{Output:} Agreement tensor $\hat{Y}$   
\begin{algorithmic} 
    \STATE $Y \leftarrow \diag(\onevct)$
    \FOR{each outer iteration} 
        \STATE $Y \leftarrow Y + \zeta  W$
        \hfill\COMMENT{gradient descent step}
        \FOR{each inner iteration}
            \STATE $v \leftarrow$ \texttt{find\_neg\_tensor\_evec($Y$)}
            \hfill\COMMENT{invoking Algorithm \ref{alg:new_neg_vector}}
            \STATE $v \leftarrow \frac{1}{\norm{v}} v$
            \STATE $V \leftarrow v^\om$
            \STATE $c_v \leftarrow \inprod[V]{Y}$ \hfill\COMMENT{run projection step if $Y$ is not PSD}
            \IF{$c_v < 0$} 
                \STATE $Y \leftarrow Y - c_v Y$ \hfill\COMMENT{$Y \succeq_{\Smp} 0$}
                \STATE $Y \leftarrow Y - (\frac{1}{n^m} \sum_{i_1,\dots,i_m} Y_{i_1,\dots,i_m}) \onevct^\om$ \hfill\COMMENT{$\langle Y, \onevct^{\otimes m} \rangle = 0$}
                \STATE $Y_{i,\dots,i} \leftarrow 1$ \hfill\COMMENT{$Y_{i,\ldots, i} = 1$}
            \ENDIF
        \ENDFOR
        \STATE $Y \leftarrow Y - (\frac{1}{n^m} \sum_{i_1,\dots,i_m} Y_{i_1,\dots,i_m}) \onevct^\om$
        \STATE $Y_{i,\dots,i} \leftarrow 1$
    \ENDFOR
\end{algorithmic}
\end{algorithm}

\begin{algorithm}
\caption{Negative Tensor Eigenvalue (see Equation (3.1) and Theorem 5 in \citet{han2013unconstrained})}
\label{alg:new_neg_vector}
\textbf{Input:} Target tensor $A$, step size $\gamma$  \\
\textbf{Output:} Target vector $x$   
\begin{algorithmic} 
    \STATE Initialize $x$
    \STATE $f_1 \leftarrow \infty$
    \FOR{each iteration} 
        \STATE $f_1' \leftarrow f_1$ \hfill\COMMENT{record previous $f_1$ value}
        \STATE $f_1 \leftarrow \frac{1}{2m} \inprod[I]{x^\om}^2 + \frac{1}{m} \inprod[A]{x^\om}$ \hfill\COMMENT{check current objective value}
        \IF{$f_1 < 0$ or $\inprod[A]{x^\om} < 0$}
            \STATE Break \hfill\COMMENT{CAN certify negative definiteness}
        \ENDIF
        \IF{$f_1 > f_1'$}
            \STATE Break \hfill\COMMENT{$f_1$ is starting to grow; stop}
        \ENDIF
        \STATE $\nabla f_1 \leftarrow \inprod[I]{x^\om} Ix^{\otimes m-1} + Ax^{\otimes m-1}$
        \STATE $x \leftarrow x - \gamma \nabla f_1$ \hfill\COMMENT{gradient descent step}
    \ENDFOR
    \IF{$f_1 \geq 0$ and $\inprod[A]{x^\om} \geq 0$}
        \STATE{$x \leftarrow \zerovct$}
    \ENDIF
\end{algorithmic}
\end{algorithm}

Once Algorithm \ref{alg:new_randomized} returns the tensor $Y$, we compare it with the groundtruth $Y^\ast$. To do so, we check the value $h(Y) := \inprod[Y]{Y^\ast} / \inprod[Y^\ast]{Y^\ast}$. 
Note that the balanced constraint $\inprod[Y]{\onevct^{\otimes m}} = 0$ from \eqref{hopt:primal} is enforced by our solver. Thus if $Y$ is fully noisy (i.e., every entry is $0$ on average), the value $h(Y)$ will be $0$. 

We test Algorithm \ref{alg:new_randomized} under two different settings of parameters and compare their results. In the first experiment, we set $\alpha = (0.9, 0.1, 0, 0.1, 0.9)$ as the counting parameter vector. In the second experiment, we set $\alpha = (0.6, 0.4, 0, 0.4, 0.6)$. Intuitively, the hypergraph in the second experiment is much noisier than the one in the first experiment. 
We run both experiments for $10$ trials. 
In the first experiment, the average value of $h(Y)$ is $0.298$.
In the second experiment, the average value of $h(Y)$ is $-0.249$. Note that the threshold of $h(Y)$ is $0$ in a fully noisy case. Since the value is significantly greater than the fully noisy threshold in the first case but not in the second case, Algorithm \ref{alg:new_randomized} captures the underlying structure if the signal is strong enough. This matches our findings in Theorem \ref{thm:main}.


\section{Example Models Reparametrized to the Class of \texorpdfstring{$m$}{m}-HPMs}
\label{appendix:examplemodels}
We first provide the following lemma, which characterize the connection between the expectation weights $\alpha$ and the coefficient parameter $p$. 
\begin{lemma}
Let $\Expect{W_{i_1,\dots,i_m}} = \alpha_l$, if $l$ out of $y_1, \dots, y_m$ are equal to $+1$. We denote $\alpha = (\alpha_0, \dots, \alpha_m)$ as the vector of expectation weights of $W$.
Let $L \in \R^{(m+1)\times (m+1)}$, where 
$
L_{ij} = 
\sum_{s = \max(0, j-i)}^{\min(j-1, m-i+1)} 
(-1)^{i+s-1} \binom{i-1}{j-s-1} \binom{m-i+1}{s}
$. 
Then for any $\alpha$, we have $L p = \alpha$ and $L^{-1} \alpha = p$. We say $L$ is the linear transformation between the expectation weights $\alpha$ and the coefficient parameter $p$.
\label{lemma:transform}
\end{lemma}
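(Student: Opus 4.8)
The plan is to prove the identity $\alpha=Lp$ by a direct entrywise computation of $\Expect{W}$ from property~\ref{MP1}, and then to show that $L$ is invertible by recognizing its rows, up to an alternating sign, as the coefficient sequences of a linearly independent family of polynomials; the relation $L^{-1}\alpha=p$ then follows at once.

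First I would fix a tuple $(i_1,\ldots,i_m)$ in which exactly $l$ of $y^\ast_{i_1},\ldots,y^\ast_{i_m}$ equal $+1$ and expand~\ref{MP1} entrywise. Each rank-one summand contributes $\prod_{j=1}^m\bigl(z_j+(1-z_j)y^\ast_{i_j}\bigr)=\prod_{j:\,z_j=0}y^\ast_{i_j}$, so multiplying through by $\prod_{j=1}^m y^\ast_{i_j}$ and arguing exactly as in the proof of Theorem~\ref{thm:main} gives $\Expect{W_{i_1,\ldots,i_m}}\prod_{j=1}^m y^\ast_{i_j}=\sum_{k=0}^m p_k\,f(m,l,k)$, where $f(m,l,k)=\sum_s(-1)^s\binom{l}{k-s}\binom{m-l}{s}$ is the $k$-th elementary symmetric polynomial in $l$ copies of $+1$ and $m-l$ copies of $-1$. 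Since $m$ is even, $\prod_{j=1}^m y^\ast_{i_j}=(-1)^{m-l}=(-1)^l$, so $\alpha_l=\Expect{W_{i_1,\ldots,i_m}}=(-1)^l\sum_{k}p_k\,f(m,l,k)$. Substituting $i=l+1$, $j=k+1$ and using $(-1)^{i+s-1}=(-1)^l(-1)^s$, this reads $\alpha_l=\sum_{k}L_{l+1,k+1}\,p_k$, i.e. $\alpha=Lp$, the $\{1,\ldots,m{+}1\}$ indexing of $L$ matching the $\{0,\ldots,m\}$ indexing of $\alpha$ and $p$ via the shift by one.

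For invertibility I would use the generating-function identity $f(m,l,k)=[t^k]\,(1+t)^l(1-t)^{m-l}$, obtained by extracting the coefficient of $t^k$ in $\prod_{j=1}^m(1+t x_j)$ specialized to $l$ ones and $m-l$ minus-ones. Hence $L=D\,C$, where $D=\mathrm{diag}\bigl((-1)^0,(-1)^1,\ldots,(-1)^m\bigr)$ and $C$ is the matrix whose $(l{+}1)$-th row lists the coefficients of $g_l(t):=(1+t)^l(1-t)^{m-l}\in\R[t]_{\le m}$. Since $D$ is invertible, it suffices to show $\{g_0,\ldots,g_m\}$ is a basis of the $(m{+}1)$-dimensional space $\R[t]_{\le m}$, i.e. that $\sum_l c_l g_l\equiv 0$ forces $c=0$. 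Substituting $t=\tfrac{1-s}{1+s}$ yields $1+t=\tfrac{2}{1+s}$ and $1-t=\tfrac{2s}{1+s}$, hence $g_l=\tfrac{2^m}{(1+s)^m}\,s^{m-l}$, so $\sum_l c_l g_l\equiv 0$ forces $\sum_l c_l s^{m-l}\equiv 0$ and therefore $c=0$. Thus $C$, and with it $L$, is invertible, and $\alpha=Lp$ gives $L^{-1}\alpha=p$.

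The genuinely new content — and the step most prone to slips — is the invertibility claim: the entrywise expectation formula merely repackages the computation already carried out in the proof of Theorem~\ref{thm:main}, whereas observing that the rows of $L$ are, up to alternating signs, the coefficient vectors of the linearly independent family $\{(1+t)^l(1-t)^{m-l}\}_{l=0}^m$ — so that $L$ has full rank — is the crux. One must also keep careful track of the sign $(-1)^{m-l}=(-1)^l$ (where we use that $m$ is even) and of the index shift between $p,\alpha$ and $L$, so that the derived expression reproduces the stated $L_{ij}$ verbatim.
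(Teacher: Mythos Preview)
Your derivation of $\alpha=Lp$ is essentially the same as the paper's: both expand \ref{MP1} entrywise, count how many of the $m-k$ unmasked labels equal $-1$, and arrive at $\alpha_l=\sum_k p_k\sum_s(-1)^{l+s}\binom{l}{k-s}\binom{m-l}{s}$, then reindex $l=i-1$, $k=j-1$ to match the stated $L_{ij}$. The only cosmetic difference is that you first multiply by $\prod_j y^\ast_{i_j}$ and then cancel the sign $(-1)^{m-l}=(-1)^l$, whereas the paper carries the sign through directly; the computations are identical.

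Where you genuinely go beyond the paper is in the invertibility step. The paper's proof simply stops after establishing $\alpha=Lp$ and does not justify that $L$ is nonsingular; it treats $L^{-1}\alpha=p$ as an immediate consequence. Your observation that $f(m,l,k)=[t^k](1+t)^l(1-t)^{m-l}$, so that the rows of $L$ are (up to the diagonal sign matrix $D$) the coefficient vectors of the family $\{(1+t)^l(1-t)^{m-l}\}_{l=0}^m$, and your M\"obius-type substitution $t=\tfrac{1-s}{1+s}$ to show this family is a basis of $\R[t]_{\le m}$, is a clean and correct argument that fills a gap the paper leaves open. This is added value rather than a different route: you supply a step the paper omits.
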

\begin{proof}
This can be shown by looking at each entry in $\Expect{W}$. Note that
\begin{align*}
\Expect{W_{i_1,\dots,i_m}}
&= \sum_{k=0}^m p_k \sum_{\substack{z\in \{0,1\}^m \\ \onevct^\top z = k}} \prod_{j=1}^{m} \left(z_{j}  + (1-z_{j}) y^\ast_{i_j} \right) \\
&= \sum_{k=0}^m p_k \sum_{\substack{z\in \{0,1\}^m \\ \onevct^\top z = k}} \prod_{j=1}^{m} y_{i_j}^{\ast (1-z_j)} \\
&= \sum_{k=0}^m p_k \sum_{s = \max(0, k-l)}^{\min(k, m-l)} (-1)^{l+s} \binom{l}{k-s} \binom{m-l}{s} \,,
\end{align*}
where $l = \sum_{j=1}^m \onevct[y_{i_j}^\ast = 1]$ is the number of positive labels. Note that both $k$ and $l$ are bounded between $0$ to $m$, where as $i$ and $j$ in the statement are between $1$ and $m+1$. Reparameterizing $k = j-1$ and $l = i-1$, we obtain the expression of $L_{ij}$.
\end{proof}

As a side note, in terms of the expectation weights vector $\alpha$ , the closed form of $F(m, L^{-1}\alpha)$, as defined in the proof of Theorem \ref{thm:main}, is $\min(\inprod[\Xi_m]{(\alpha_0,\dots,\alpha_{m-1}} , \inprod[\Xi_m]{(\alpha_1,\dots,\alpha_{m}})$, where $\Xi_m \in \R^n$ is the $m$-th row of Pascal's triangle.

We now proceed to show the example models in Section 5 belong to the class of $m$-HPMs.

\noindent\textbf{Example Model \ref{eg:count}.}
We first show Model \ref{eg:count} belongs to the class of $m$-HPMs. For simplicity we only consider the case of $m = 4$. Let $n = \abs{V}$, $W$ be the affinity tensor, where $W_{i_1 i_2 i_3 i_4} = c(v_{i_1},v_{i_2},v_{i_3},v_{i_4})$. Without loss of generality we label $y_i = 1$ if vertex $i$ is from the first group, and $-1$ otherwise. 

Model \ref{eg:count} needs to fulfill \ref{MP1}, \ref{MP2} and \ref{MP3}. The latter two are trivial because Model \ref{eg:count} is bounded, and it remains to prove \ref{MP1}.
From the assumption of binomial distribution, $\Expect{W_{i_1 i_2 i_3 i_4}} = T \alpha_l$ if $l$ labels from $y_{i_1}^\ast, y_{i_2}^\ast, y_{i_3}^\ast, y_{i_4}^\ast$ are the same (w.l.o.g. pick the smaller group). To fulfill \ref{MP1}, one needs to find a vector $p = (p_0, \ldots, p_4)$ satisfying both 
$${\Expect{W} = \sum_{k=0}^m p_k \sum_{\substack{z\in \{0,1\}^m \\ \onevct^\top z = k}} \bigotimes_{i=1}^{m} \left(z_i \onevct + (1-z_i) y^\ast \right)}$$
and
$$\Expect{W_{i_1 i_2 i_3 i_4}} = T\alpha_l,$$
for every $i_1, i_2, i_3, i_4 \in [n]$.
By Lemma \ref{lemma:transform} we have the following linear system
\[
Lp = 
\begin{pmatrix}
1 & -4 & 6 & -4 & 1\\
-1 & 2 & 0 & -2 & 1 \\
1 & 0 & -2 & 0 & 1 \\
-1 & -2 & 0 & 2 & 1 \\
1 & 4 & 6 & 4 & 1 
\end{pmatrix}
\begin{pmatrix}
    p_0\\p_1\\p_2\\p_3\\p_4
\end{pmatrix}
=
T
\begin{pmatrix}
    \alpha_0 \\ \alpha_1 \\ \alpha_2 \\ \alpha_1 \\ \alpha_0
\end{pmatrix}.
\]
By solving the linear equation system above we obtain
\[
\begin{pmatrix}
    p_0\\p_1\\p_2\\p_3\\p_4
\end{pmatrix}
=
T
\begin{pmatrix}
    \alpha_0/8 - \alpha_1/2 + 3\alpha_2 /8 \\
    0 \\
    \alpha_0/8 - \alpha_2 /8 \\
    0 \\
    \alpha_0/8 + \alpha_1/2 + 3\alpha_2 /8
\end{pmatrix}.
\]
Thus, by setting $p$ as above, \ref{MP1} is fulfilled and we have shown Model \ref{eg:count} belongs to the class of $4$-HPMs.


\noindent\textbf{Example Model \ref{eg:cuts}.}
The procedure is similar to the previous model. Model \ref{eg:cuts} needs to fulfill \ref{MP1}, \ref{MP2} and \ref{MP3}. Model \ref{eg:cuts} is bounded, thus one only needs to prove \ref{MP1} by finding the expectation of each single entry in $W$ and solving for $p$.
We use $\alpha'_l$ to denote the expected hypergraph cut size of $(v_{i_1},v_{i_2},v_{i_3},v_{i_4})$ if $l$ labels in $y_{i_1}^\ast, y_{i_2}^\ast, y_{i_3}^\ast, y_{i_4}^\ast$ are the same. Then by Lemma \ref{lemma:transform}, solving $p$ in the linear system 
\[
Lp = 
\begin{pmatrix}
    1 & -4 & 6 & -4 & 1\\
    -1 & 2 & 0 & -2 & 1 \\
    1 & 0 & -2 & 0 & 1 \\
    -1 & -2 & 0 & 2 & 1 \\
    1 & 4 & 6 & 4 & 1 
\end{pmatrix}
\begin{pmatrix}
    p_0\\p_1\\p_2\\p_3\\p_4
\end{pmatrix}
=
\begin{pmatrix}
    \alpha'_0 \\ \alpha'_1 \\ \alpha'_2 \\ \alpha'_1 \\ \alpha'_0
\end{pmatrix},
\]
\ref{MP1} is fulfilled and we have shown Model \ref{eg:cuts} belongs to the class of $4$-HPMs.


\noindent\textbf{Example Model \ref{eg:bisection}}
We now show Model \ref{eg:bisection} belongs to the class of $m$-HPMs. For simplicity we only consider the case of $m = 4$. Let $n = \abs{V}$, $W$ be the affinity tensor, where $W_{i_1 i_2 i_3 i_4} = 1$ if $(v_{i_1},v_{i_2},v_{i_3},v_{i_4}) \in H$, and $0$ otherwise. 

Model \ref{eg:bisection} needs to fulfill \ref{MP1}, \ref{MP2} and \ref{MP3}. The latter two are trivial because Model \ref{eg:bisection} is bounded, and it remains to prove \ref{MP1}.
Note that the expectation $\Expect{W_{i_1 i_2 i_3 i_4}}$ depends on the group assignments $y_{i_1}^\ast,y_{i_2}^\ast,y_{i_3}^\ast$ and $y_{i_4}^\ast$. From the model definition one can find that
\begin{itemize}
    \item $\Expect{W_{i_1 i_2 i_3 i_4}} = q^4 + (1-q)^4$, if all four vertices are from the same group;
    \item $\Expect{W_{i_1 i_2 i_3 i_4}} = q(1-q)^3 + q^3(1-q)$, if three vertices are from one group, and one is from the other group;
    \item $\Expect{W_{i_1 i_2 i_3 i_4}} = 2q^2 (1-q)^2$, if two vertices are from one group and two are from the other group.
\end{itemize}
Thus by Lemma \ref{lemma:transform} we can transform the conditions into the following linear system
\[
Lp = 
\begin{pmatrix}
    1 & -4 & 6 & -4 & 1\\
    -1 & 2 & 0 & -2 & 1 \\
    1 & 0 & -2 & 0 & 1 \\
    -1 & -2 & 0 & 2 & 1 \\
    1 & 4 & 6 & 4 & 1 
\end{pmatrix}
\begin{pmatrix}
    p_0\\p_1\\p_2\\p_3\\p_4
\end{pmatrix}
=
\begin{pmatrix}
    q^4 + (1-q)^4 \\ q(1-q)^3 + q^3(1-q) \\ 2q^2 (1-q)^2 \\ q(1-q)^3 + q^3(1-q) \\ q^4 + (1-q)^4
\end{pmatrix}.
\]
Thus, by solving for $p$ in the linear system above, \ref{MP1} is fulfilled and Model \ref{eg:bisection} belongs to the class of $4$-HPMs.


\noindent\textbf{Example Model \ref{eg:motif}.}
In Figure \ref{fig:motif} we show three example motifs of size $4$. For concreteness let us consider the last motif. This motif has been used to model food chains in the Florida Bay food web \citep{li2017inhomogeneous}. In this motif, nodes are considered as species, and the directed edges represent carbon flow, i.e., a directed edge $i \to j$ can be interpreted as species $i$ consumes species $j$. Therefore the motif can capture interaction and energy flow between multiple species in the food web.

We now show Model \ref{eg:motif} with the last motif (food chain) from Figure \ref{fig:motif} belongs to the class of $4$-HPMs. Let $n = \abs{V}$, $W$ be the affinity tensor, where $W_{i_1 i_2 i_3 i_4} = 1$ if $(v_{i_1},v_{i_2},v_{i_3},v_{i_4}) \in H$, and $0$ otherwise. Without loss of generality we label $y_i^\ast = 1$ if vertex $i$ is from group $S_1$, and $-1$ otherwise. 

Model \ref{eg:motif} needs to fulfill \ref{MP1}, \ref{MP2} and \ref{MP3}. Again Model \ref{eg:motif} is bounded, and one only needs to prove \ref{MP1} by finding the expectation of each single entry in $W$ and solving for $p$. 
Note that the expectation $\Expect{W_{i_1 i_2 i_3 i_4}}$ depends on the group assignments $y_{i_1}^\ast,y_{i_2}^\ast,y_{i_3}^\ast$ and $y_{i_4}^\ast$. By careful analysis of combinations, one can find that
\begin{itemize}
    \item $\Expect{W_{i_1 i_2 i_3 i_4}} = \beta_0 := 6\alpha_{1,1}^8 (1-\alpha_{1,1})^4$, if all four labels are positive;
    \item $\Expect{W_{i_1 i_2 i_3 i_4}} = \beta_1 := 3\alpha_{1,1}^4 \alpha_{1,2}^3 \alpha_{2,1} (1-\alpha_{1,1})^2 (1-\alpha_{2,1})^2 + 3\alpha_{1,1}^4 \alpha_{1,2} \alpha_{2,1}^3 (1-\alpha_{1,1})^2 (1-\alpha_{1,2})^2 $, if three labels are positive;
    \item $\Expect{W_{i_1 i_2 i_3 i_4}} = \beta_2 := \alpha_{1,1}^2 \alpha_{1,2}^4 \alpha_{2,2}^2 (1-\alpha_{2,1})^4 + 4 \alpha_{1,1} \alpha_{1,2}^2 \alpha_{2,1}^2 \alpha_{2,2} (1-\alpha_{1,1})(1-\alpha_{1,2})(1-\alpha_{2,1})(1-\alpha_{2,2}) + \alpha_{1,1}^2 \alpha_{2,1}^4 \alpha_{2,2}^2 (1-\alpha_{1,2})^4$, if two labels are positive;
    \item $\Expect{W_{i_1 i_2 i_3 i_4}} = \beta_3 := 3 \alpha_{1,2}^3 \alpha_{2,1} \alpha_{2,2}^4 (1-\alpha_{2,1})^2 (1-\alpha_{2,2})^2 + 3 \alpha_{1,2} \alpha_{2,1}^3 \alpha_{2,2}^4 (1-\alpha_{1,2})^2 (1-\alpha_{2,2})^2$, if one label is positive;
    \item $\Expect{W_{i_1 i_2 i_3 i_4}} = \beta_4 := 6\alpha_{2,2}^8 (1-\alpha_{2,2})^4$, if all four labels are negative.
\end{itemize}
Thus by Lemma \ref{lemma:transform} we can transform the conditions into the following linear system
\begin{align*}
Lp = 
\begin{pmatrix}
    1 & -4 & 6 & -4 & 1\\
    -1 & 2 & 0 & -2 & 1 \\
    1 & 0 & -2 & 0 & 1 \\
    -1 & -2 & 0 & 2 & 1 \\
    1 & 4 & 6 & 4 & 1 
\end{pmatrix}
\begin{pmatrix}
    p_0\\p_1\\p_2\\p_3\\p_4
\end{pmatrix} 
= 
\begin{pmatrix}
    \beta_0 \\
    \beta_1 \\
    \beta_2 \\
    \beta_3 \\
    \beta_4
\end{pmatrix}.
\end{align*}
Thus, by solving for $p$ in the linear system above, \ref{MP1} is fulfilled and Model \ref{eg:motif} belongs to the class of $4$-HPMs.

\end{document}